\newtheorem{theorem}{Theorem}[section]
\algrenewcommand\algorithmicindent{0.5em}
\definecolor{mygray}{gray}{0.85}
\title{Hamiltonian Neural PDE Solvers through \\ Functional Approximation}
\author{%
  Anthony Zhou \\ 
  Carnegie Mellon University\\
  \texttt{ayz2@andrew.cmu.edu} 
  \And 
  Amir Barati Farimani \\ 
  Carnegie Mellon University \\ 
\texttt{barati@cmu.edu} 
}
\begin{document}

\maketitle

\begin{abstract}
    Designing neural networks within a Hamiltonian framework offers a principled way to ensure that conservation laws are respected in physical systems. While promising, these capabilities have been largely limited to discrete, analytically solvable systems. In contrast, many physical phenomena are governed by PDEs, which govern infinite-dimensional fields through Hamiltonian functionals and their functional derivatives. Building on prior work, we represent the Hamiltonian functional as a kernel integral parameterized by a neural field, enabling learnable function-to-scalar mappings and the use of automatic differentiation to calculate functional derivatives. This allows for an extension of Hamiltonian mechanics to neural PDE solvers by predicting a functional and learning in the gradient domain. We show that the resulting Hamiltonian Neural Solver (HNS) can be an effective surrogate model through improved stability and conserving energy-like quantities across 1D and 2D PDEs. This ability to respect conservation laws also allows HNS models to better generalize to longer time horizons or unseen initial conditions. 
\end{abstract}

\section{Introduction}
Physics is remarkably complex; a set of governing laws can produce endlessly diverse and interesting phenomena. In an effort to unify governing laws, Hamiltonian mechanics describes physical phenomena under the Principle of Least Action, whereby the trajectory of a system evolves because it has locally the least action. Remarkably, the minimization of action can derive nearly all physical phenomena, from kinematics to quantum mechanics, suggesting an underlying, fundamental property of nature. Rather than describing physics through forces and masses, such as in Newtonian mechanics, using energy and symmetries allows for an elegant and unifying perspective of physical systems. 

Despite the long history of Hamiltonian mechanics, most current numerical and neural PDE solvers operate within a Newtonian framework. This has been very successful; forces and masses are easy to intuitively understand, and both numerical and neural solvers have found important, practical uses. However, motivated by natural conservation laws and symmetries that arise from Hamiltonian mechanics, we seek to investigate neural PDE solvers that operate in this framework. We find that the Hamiltonian framework brings inductive biases that improve model performance, especially when conserving energy-like quantities. 

Extending Hamiltonian mechanics to neural PDE solvers requires both theoretical and implementation insights. In particular, most PDEs are used to describe continuum systems (fluids, waves, elastic bodies, etc.); as such, infinite-dimensional Hamiltonian mechanics are needed. In this setup, the Hamiltonian is defined as a functional that maps from input fields to a scalar, often interpreted as the energy of the system. The variational or functional derivative of the Hamiltonian then contains dynamical information that evolves the system. This introduces interesting modeling challenges; in particular, the need to approximate infinite-dimensional to scalar mappings and for the approximation to have functional derivatives. 

In this paper, we consider models based on a learnable kernel integral, a framework that can provably approximate linear functionals and implicitly learn functional derivatives. Through automatic differentiation, functional derivatives can be obtained and optimized in a Hamiltonian framework to predict future PDE states. This allows the development of Hamiltonian Neural Solvers (HNS), which we evaluate on three PDE systems and find both high accuracy and generalization capabilities, which we hypothesize arise from inductive biases to conserve energy. Code and datasets for this work are released at \url{https://github.com/anthonyzhou-1/hamiltonian_pdes}. 

\section{Background}
\paragraph{Hamiltonian Mechanics}
Hamiltonian mechanics is usually introduced in the discrete setting, where there are a set of \(n\) bodies, each with a position and momentum. Therefore, the state of the system can be described by \(2n\) quantities, usually assembled into a position and momentum vector \(\mathbf{q}, \mathbf{p} \in \mathbb{R}^n\). In the discrete case, the Hamiltonian \(\{\mathcal{H}(\mathbf{q}, \mathbf{p}) : \mathbb{R}^{2n} \rightarrow \mathbb{R}\}\) is a function that takes two vectors and returns a scalar \(\mathcal{H}\), usually interpreted as the energy or another conserved quantity. Evolving the system in time is done through Hamilton's equations, often expressed in terms of the symplectic matrix \(J\) and state vector \(\mathbf{u}\): 

\begin{equation}
\underbrace{\frac{\text{d}\mathbf{q}}{\text{d}t} = \frac{\partial\mathcal{H}}{\partial\mathbf{p}}, \quad \frac{\text{d}\mathbf{p}}{\text{d}t} = -\frac{\partial\mathcal{H}}{\partial\mathbf{q}}}_\text{Hamilton's Equations} \qquad \underbrace{\frac{\text{d}\mathbf{u}}{\text{d}t}=\begin{bmatrix}
\text{d}\mathbf{q}/\text{d}t\\
\text{d}\mathbf{p}/\text{d}t
\end{bmatrix} = \begin{bmatrix}
0 & \mathbb{I}\\
-\mathbb{I} & 0 
\end{bmatrix}\begin{bmatrix}
\partial\mathcal{H}/\partial\mathbf{q}\\
\partial\mathcal{H}/\partial\mathbf{p}
\end{bmatrix} = J\nabla_{\mathbf{u}} \mathcal{H}}_\text{Symplectic Form} 
\end{equation}

Derivatives can be evaluated from vector calculus identities; for clarity we consider the state vector \(\mathbf{u}\) with \(2n\) components and write \(\frac{\text{d}\mathbf{u}}{\text{d}t} = [\frac{\text{d}u_1}{\text{d}t}, \ldots,  \frac{\text{d}u_{2n}}{\text{d}t}]\), and \(\nabla_\mathbf{u}\mathcal{H} = [\frac{\partial \mathcal{H}}{\partial u_1}, \ldots,  \frac{\partial\mathcal{H}}{\partial u_{2n}}]\).

\paragraph{Infinite-Dimensional Systems} Many PDEs describe continuum systems, such as waves, fluids, and elastic bodies, where the positions and momenta of discrete bodies are not well defined. Despite this change, continuum systems can still conserve energy and be viewed in a Hamiltonian framework. Firstly, finite-dimensional vectors \(\mathbf{q}, \mathbf{p} \in \mathbb{R}^n\) become generalized to functions \(u(x, t)\) defined on continuously varying coordinates \(x \in \Omega\) and at a time \(t\). The function \(u \in \mathcal{F}(\Omega)\) describes the state of the system by assigning a scalar or vector for each coordinate (such as velocity, pressure, etc.). Secondly, the Hamiltonian is generalized from a function \(\{\mathcal{H}(\mathbf{q}, \mathbf{p}) : \mathbb{R}^{2n} \rightarrow \mathbb{R}\}\) to a functional  \(\{\mathcal{H}[u] : \mathcal{F}(\Omega) \rightarrow \mathbb{R}\}\) that returns a scalar given an input function. Lastly, the symplectic matrix \(J\) generalizes to a linear operator \(\mathcal{J}\), and the gradient \(\nabla_{\mathbf{u}}\) becomes the variational or functional derivative \(\frac{\delta \mathcal{H}}{\delta u} \in \mathcal{F}(\Omega)\). This leads to Hamilton's equations for infinite-dimensional systems, where \(\epsilon \in \mathcal{F}(\Omega)\) is an arbitrary test function \cite{marsden_book}:

\begin{equation}
    \underbrace{\frac{\partial u}{\partial t} = \mathcal{J}\frac{\delta \mathcal{H}}{\delta u}}_{\text{Hamilton's Equations}} \qquad \underbrace{\int_\Omega\frac{\delta \mathcal{H}}{\delta u}(x)\epsilon(x) \text{d}x = \lim_{h\rightarrow0} \frac{\mathcal{H}[u+h\epsilon] - \mathcal{H}[u]}{h}}_{\text{Functional Derivative}}
\end{equation}

From this background we can see why neural networks are a natural choice for modeling discrete Hamiltonian systems; in this setting, the Hamiltonian is a function and its gradient is naturally calculated with automatic differentiation. In the infinite-dimensional setting, learning the Hamiltonian functional from data becomes less obvious.

\section{Functional Approximation}
\paragraph{Theory}
The development of functional approximators \cite{hu2023neural, Huang_2025, bunker2025autoencodersfunctionspace, rahman2022generative} can be seen a result of the Riesz representation theorem, a fundamental result in functional analysis \cite{kreyszig1991introductory}. We give some intuition here, but provide a full proof of linear functional approximation in Appendix \ref{app:proof}. We start by observing that a functional can be equivalent to the inner product over suitable functions:

\begin{theorem}[Riesz representation theorem]
    Let \(H\) be a Hilbert space whose inner product \(\left\langle x,y\right\rangle\) is defined. For every continuous linear functional \(\varphi \in H^{*}\) there exists a unique function \(f_{\varphi }\in H\),  called the Riesz representation of \(\varphi\), such that:
    \begin{equation}
        \varphi [x]=\left\langle x,f_{\varphi }\right\rangle  \quad {\text{ for all }}x\in H.
    \end{equation}
\end{theorem}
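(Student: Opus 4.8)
The plan is to run the classical argument through the orthogonal decomposition of a Hilbert space, separating the degenerate case from the generic one. First I would dispose of $\varphi = 0$ by taking $f_\varphi = 0$. For $\varphi \neq 0$, the strategy is to read the representer off the geometry of the kernel $N = \{x \in H : \varphi[x] = 0\}$: because $\varphi$ is linear and continuous, $N$ is a closed subspace of $H$, and because $\varphi$ is not identically zero, $N$ is proper.

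Next I would invoke the projection theorem: every closed subspace $N$ of a Hilbert space yields the orthogonal decomposition $H = N \oplus N^{\perp}$, and since $N \neq H$ the complement contains a unit vector $z$. The crux is then a short computation showing $N^{\perp}$ is one-dimensional. For an arbitrary $x \in H$, the vector $\varphi[x]\,z - \varphi[z]\,x$ is annihilated by $\varphi$, hence lies in $N$ and is orthogonal to $z$; expanding $\langle \varphi[x]\,z - \varphi[z]\,x,\, z \rangle = 0$ and solving for $\varphi[x]$ gives $\varphi[x] = \langle x, f_\varphi \rangle$ with $f_\varphi = \overline{\varphi[z]}\,z$ (the conjugation needed only over $\mathbb{C}$, and $\|z\| = 1$ absorbing the normalization). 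Uniqueness is then a one-liner: if $\langle x, f_1 \rangle = \langle x, f_2 \rangle$ for every $x$, setting $x = f_1 - f_2$ forces $\|f_1 - f_2\|^{2} = 0$.

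The single nontrivial ingredient — and the step I expect to be the main obstacle — is the projection theorem itself, since that is precisely where completeness of $H$ enters: one shows the infimum of $\|x - n\|$ over $n \in N$ is attained, using the parallelogram law to prove any minimizing sequence is Cauchy, and then checks that the minimizer splits $x$ into its $N$- and $N^{\perp}$-components. Continuity of $\varphi$ is the other essential hypothesis, used exactly once to ensure $N$ is closed; a merely linear, discontinuous functional has dense kernel and the argument collapses. If one wanted a fully self-contained proof, the minimizing-sequence argument would be inlined here, but I would prefer to quote the projection theorem as a standard result from the functional-analysis references already cited.
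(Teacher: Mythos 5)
Your proposal is the classical and correct proof: kernel-plus-orthogonal-complement decomposition via the projection theorem, the one-line computation with $\varphi[x]\,z - \varphi[z]\,x \in N$ to extract the representer $f_\varphi = \overline{\varphi[z]}\,z$, and uniqueness by testing against $f_1 - f_2$; you also correctly locate where completeness and continuity are each used. The paper does not prove this theorem itself --- it states it as a standard result and cites Kreyszig, with its appendix instead proving a different statement (approximation of linear functionals on $C(\bar{D})$ by neural-network integral kernels) --- so your argument simply supplies the standard textbook proof that the paper takes as given, and it is sound.
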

Immediately, we can see that the Hamiltonian \(\mathcal{H}[u]\) can potentially be expressed as an inner product \(\left\langle u,\kappa_\theta\right\rangle\) over the input function \(u\) and a learnable function \(\kappa_\theta\). This changes the problem of approximating a functional to the problem of approximating a function, which neural networks can readily achieve through universal approximation theorems \cite{Pinkus_1999, LESHNO1993861}. 

A common inner product for arbitrary functions \(u,v\) in a Hilbert space \(H\) involves an integral: \(\left\langle u, v\right\rangle = \int_\Omega u(x) v(x) \text{d}x\). One can check that this satisfies the basic properties of inner products and is also valid for vector-valued functions \(u,v\). Based on this formulation, functionals can be parameterized by an \textit{Integral Kernel Functional} (IKF):
\begin{equation}
    \mathcal{H}_\theta[u] = \int_{\Omega} \kappa_\theta(x)u(x) \text{d}x 
\end{equation}
In this form, we observe a connection between the integral kernel functional and the integral kernel operator that neural operators are built on \cite{kovachki_neuraloperator}. Specifically, the integral kernel operator is given by:
\begin{equation}
    \mathcal{K}_\theta(u)(x) = \int_\Omega \kappa_\theta(x, y)u(y)\text{d}y
\end{equation}
where a learnable operator \(\mathcal{K}_\theta\) acting on an input function \(u(x) \in \mathcal{U}\) produces an output function \(\mathcal{K}(v)(x) \in \mathcal{V}\) for Banach spaces \(\mathcal{U}, \mathcal{V}\); this expression is equivalent to the IKF when evaluated at a single point \(x\). We can make a similar modification to neural operators where the kernel is allowed to be nonlinear by giving function values as input: \(\kappa_\theta(x, y, u(x), u(y))\). In our setting, we can adopt a nonlinear kernel by allowing the kernel to change based on the input function: \(\kappa_\theta(x, u(x))\). 

One shortcoming of this framework is that the approximation of nonlinear functionals is not proven. Most Hamiltonians are nonlinear with respect to \(u\), however, we empirically find that the proposed architecture can still approximate nonlinear functionals by using a nonlinear kernel \(\kappa_\theta(x, u(x))\). There are some potential hypotheses for this. The Riesz representation theorem is a strong statement, implying that for linear functionals, there is a single, unique \(\kappa_\theta \in H\) that can be used with any \(u \in H\) to approximate \(\mathcal{H}[u]\). In deep learning we are less constrained; we can potentially allow \(\kappa_\theta(u)\) to depend on \(u\) and even relax the uniqueness of \(\kappa_\theta\) to still be able to approximate \(\mathcal{H}[u]\). Furthermore, under assumptions of regularity and compact support, for arbitrary functionals \(\mathcal{H}\) we can always show that a function \(f \in H\) exists such that \(\mathcal{H}[u] = \left\langle u, f \right\rangle \) by constructing \(f = \frac{\mathcal{H}[u]}{||u||_2^2} u\), with the norm: \(||u||_2 = \sqrt{\int_\Omega u(x)^2dx}\). 

\paragraph{Implementation} There are two main implementation choices: approximating the integral and parameterizing the kernel. The integral can be approximated by a Riemann sum:
\begin{equation}
    \mathcal{H}_\theta[u] = \int_\Omega \kappa_\theta(x, u(x))u(x) dx \approx \sum_i \kappa_\theta(x_i, u(x_i))u(x_i) \mu_i\Delta x
\end{equation}
with quadrature weights \(\mu_i\). Interestingly, the full Riemann sum can be computed since the sum is only calculated once; in neural operator literature, approximating the integral with a full Riemann sum is usually too costly since the sum is evaluated for every query point. This results in approaches that truncate the sum \cite{li2020neuraloperatorgraphkernel, multipole, li2023geometryinformedneuraloperatorlargescale} or represent it in Fourier space through the convolution theorem \cite{li2021fourierneuraloperatorparametric}. In our implementation, we evaluate different quadratures but find that the trapezoidal rule is enough to accurately estimate the integral.

To parameterize the kernel, we take the perspective from the Riesz representation theorem. In the linear case, the kernel is simply a function \(\kappa_\theta(x) \in H\) that maps input coordinates to values in the codomain. This is exactly what a neural field is; therefore, we can inherit the substantial prior work on neural fields \cite{xie2022neuralfieldsvisualcomputing} to effectively parameterize \(\kappa_\theta\). 

One useful concept is the conditional neural field. In particular, the nonlinear kernel \(\kappa_\theta(x, u(x))\) can be seen as a conditional neural field whereby the kernel output changes based on the input function \(u(x)\). This concept can also be further extended to distinguish between local and global conditioning \cite{peng2020convolutionaloccupancynetworks}; in neural field literature, the field can be conditioned on local information \(u_i = u(x_i)\) or global information \(\mathbf{u} = [u(x_0), \ldots, u(x_n)]\). Conditioning mechanisms can also be implemented in various ways. In the simplest case, the conditional information \(u_i\) or \(\mathbf{u}\) is concatenated to the input coordinate \(x\) \cite{mehta2021modulatedperiodicactivationsgeneralizable, park2019deepsdflearningcontinuoussigned, mescheder2019occupancynetworkslearning3d}, however, we adopt a approach based on Feature-wise Linear Modulation (FiLM) \cite{perez2017filmvisualreasoninggeneral}. 

Beyond conditioning mechanisms, an important consideration is the architecture of the kernel itself. A powerful neural field architecture is the sinusoidal representation network or SIREN \cite{sitzmann2020implicitneuralrepresentationsperiodic}. SIRENs have shown good performance not only in fitting a neural field \(\kappa_\theta\), but also in accurately representing gradients \(\nabla\kappa_\theta\), which may help in fitting the functional derivative \(\delta\mathcal{H}/\delta u\). With these architectural choices, a layer of the kernel \(\kappa^{(l)}_\theta\) with local conditioning can be written as: 
\begin{equation}
    \kappa^{(l)}_\theta(x_i, u_i) = \gamma^{(l)}_\theta(u_i)\text{sin}(Wx_i + b) + \beta^{(l)}_\theta(u_i)
    \label{eq:layer}
\end{equation}
where \(\gamma_\theta^{(l)}\) and \(\beta_\theta^{(l)}\) are the FiLM scale and shift networks at layer \(l\). Each SIREN layer has its own FiLM network and each FiLM network can also have multiple layers. Inputs \(x_i, u_i\) can optionally be lifted and the output \(z_i\) can be projected back to the function dimension. Given the hierarchical nature of the proposed implementation, we also illustrate the architecture in Figure \ref{fig:overview}. 

\begin{figure}
    \centering
    \includegraphics[width=\linewidth]{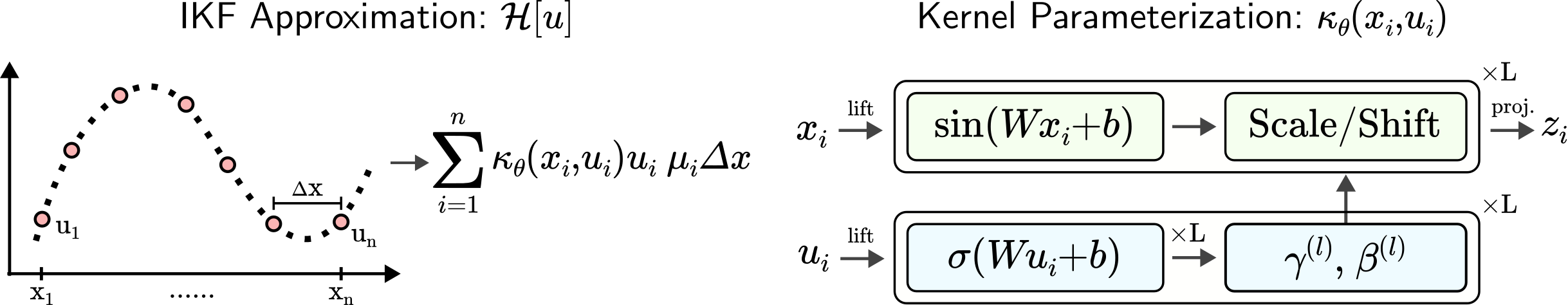}
    \caption{\textit{Left:} The functional \(\mathcal{H}[u]\) is approximated by an integral kernel functional (IKF). The integral is further approximated by a Riemann sum with a learnable kernel \(\kappa_\theta\). \textit{Right:} The kernel can be parameterized with various architectures and conditioning mechanisms. A local, FiLM-conditioned SIREN kernel is shown, where each SIREN layer has a dedicated FiLM network to produce scale and shift parameters. The output \(z_i\) is for a single point \(x_i\), which needs to be calculated \(\forall i=\{1,\ldots,n\}\)}
    \label{fig:overview}
\end{figure}

A final consideration during implementation is batching. The layer update in Equation \ref{eq:layer} is written in pointwise form, but we can observe that the computation does not depend on other points \(\{(x_j, u_j):j\neq i\}\), and the weights are shared between points. Therefore, the forward and backward pass for each point can be done in parallel; in practice, the spatial dimension is reshaped into the batch dimension for efficient training and inference. 

\paragraph{Hamiltonian Neural Solvers} So far, integral kernel functionals are largely agnostic to the downstream task, only requiring that the input function \(u\) be discretized at a set of points \((x_i, u_i)\). To use functional approximators as a PDE solver, we introduce the Hamiltonian Neural Solver (HNS). The time evolution of PDEs can be described by Hamilton's equations, which depend on the operator \(\mathcal{J}\) and the functional derivative \(\delta \mathcal{H}/\delta u\). In Hamiltonian PDE systems, \(\mathcal{J}\) is analytically derived or can be looked up, is guaranteed to be linear, and is usually simple. A common example in 1D is \(\mathcal{J} = \partial_x\). Therefore, we choose to approximate \(\mathcal{J}\) with a finite difference scheme, although it can also be learned with a neural operator. Lastly, we find that \(\delta \mathcal{H}/\delta u\) can be computed directly with automatic differentiation. This is empirically shown in Section \ref{sec:toy}, but one way to see this is by considering the linear IKF. The gradient \(\nabla_{\mathbf{u}} \sum_{i=1}^n \kappa_\theta(x_i)u_i \:\mu_i\Delta x = [\kappa_\theta(x_1), \ldots,\kappa_\theta(x_n) ]\), up to a constant factor, which is the discretization of the learned function \(\kappa_\theta(x)\). In this case, the functional derivative is approximated by an arbitrary function, which is the desired behavior. 

\begin{wrapfigure}[9]{R}{0.5\textwidth}
    \vspace{-2.2em}
    \begin{minipage}{0.475\textwidth}
        \begin{algorithm}[H]
        \caption{Training a HNS} \label{alg:cap}
        \begin{algorithmic}[1]
        \Repeat
        \State \(\mathcal{H}_\theta \gets \sum_{i=1} ^n\kappa_\theta(x_i, u_i)u_i \:\mu_i\Delta x\)
        
        \State \(\frac{\delta \mathcal{H}_\theta}{\delta u} \gets \text{autograd}(\mathcal{H}_\theta, \mathbf{u})\)
        \State \(\mathcal{L} = ||\frac{\delta \mathcal{H}_\theta}{\delta u} - \frac{\delta H}{\delta u}||^2  \: \text{or} \: ||\mathcal{J}(\frac{\delta \mathcal{H}_\theta}{\delta u}) - \frac{\text{d}\mathbf{u}}{\text{d}t}||^2 \)
        \State \(\theta \gets \text{Update}(\theta, \nabla_\theta \mathcal{L})\)
        \Until{converged}
        \end{algorithmic}
        \end{algorithm}
    \end{minipage}
\end{wrapfigure}

Within this framework, the training is described in Algorithm \ref{alg:cap}. For Hamiltonian systems, we assume knowledge of the analytical form of \(\mathcal{H}\) and \(\delta\mathcal{H}/\delta u\), and labels for \(\delta\mathcal{H}/\delta u\) or \(\text{d}\mathbf{u}/\text{d}t\) can be calculated from training data using numerical schemes. The training loss can either be evaluated in the functional form \(\delta\mathcal{H}_\theta/\delta u\) or the temporal form \(\mathcal{J}(\delta\mathcal{H}_\theta/\delta u)\); we implement the former since it more directly optimizes \(\kappa_\theta\). 

During inference, the current state \(\mathbf{u}^t\) and coordinates \(\mathbf{x}\) are used in a forward pass to compute \(\mathcal{H}_\theta\). A backward pass is then used to calculate \(\delta \mathcal{H}_\theta/\delta u\); using the operator \(\mathcal{J}\), a prediction for \(\frac{\text{d}\mathbf{u}}{\text{d}t}|_{t=t} = \mathcal{J}(\frac{\delta\mathcal{H}_\theta}{\delta u})\) is calculated. The estimated derivative \(\frac{\text{d}\mathbf{u}}{\text{d}t}\) is used to update the state \(\mathbf{u}^{t+1} = \text{ODEint}(\mathbf{u}^t, \frac{\text{d}\mathbf{u}}{\text{d}t})\) using a numerical integrator. In our implementation, we use a 2nd-order Adams-Bashforth method.

\section{Experiments}
\subsection{Toy Examples}
\label{sec:toy}
To better understand and situate integral kernel functionals, we examine their ability to model simple, analytically constructed functionals and compare their performance to other architectures. The experimental setup is as follows: given a functional \(\mathcal{F}[u]\), random polynomials \(u(x) = c_0x^p + c_1x^{p-1} + \ldots + c_{p-1}x + c_p\) are generated by uniformly sampling \(\{c_i \in [a, b]:i=0,\ldots,p\}\). Therefore, the dataset consists of \(N\) pairs of polynomials  and evaluated functionals \((u^n(x),\mathcal{F}[u^n(x)])\) for \(n=1,\ldots,N\). In practice, each function \(u^n(x)\) is discretized at a set of points \(\{x_i:i=1, \ldots, M\}\), such that it is represented as \(\mathbf{u}^n = [u^n(x_1), \ldots ,u^n(x_L)]\), and \(\mathcal{F}[\mathbf{u}^n]\) remains a scalar. 

We consider two cases: a linear functional \(\mathcal{F}_l[u] = \int_{x_1}^{x_M}u(x)*x^2\text{d}x\) and a nonlinear functional \(\mathcal{F}_{nl}[u] = \int_{x_1}^{x_M}(u(x))^3\text{d}x\). In both cases, constructed polynomials \(u^n(x)\) can be substituted and the definite integral can be analytically solved to use as training labels. Additionally, the functional derivatives are \(\frac{\delta\mathcal{F}_l}{\delta u} = x^2\) and \(\frac{\delta\mathcal{F}_{nl}}{\delta u} = 3u^2\), which are used to evaluate the gradient performance of models. We restrict the degree of \(u^n(x)\) to \(p=2\) and sample \(c_i \in [-1, 1]\); 100 training and 10 validation samples are generated for each case. Coordinates \(x_i\) are uniformly spaced on the domain \([-1, 1]\) with \(M=100\) points. 

Two baselines are considered: (1) a \textbf{MLP} that takes concatenated inputs \([\mathbf{x},\mathbf{u}]\in \mathbb{R}^{2M}\) and projects to an output \(\mathcal{F}_\theta \in \mathbb{R}\) through hidden layers, and (2) a \textbf{FNO} with mean-pooling at the final layer to evaluate an operator baseline. The FNO baseline takes inputs with spatial and channel dimensions, or stacked inputs \([\mathbf{x},\mathbf{u}]\in \mathbb{R}^{M\times2}\) and projects to an output field \(\mathbf{z} \in \mathbb{R}^{M}\), which is then pooled to \(\mathcal{F}_\theta \in \mathbb{R}\). For simplicity, the integral kernel functional (\textbf{IKF}) is used with a local MLP kernel and is linear for \(\mathcal{F}_l\) and nonlinear for \(\mathcal{F}_{nl}\). In the nonlinear case, FiLM is not used and \([x_i, u_i] \in \mathbb{R}^2\) is concatenated as input to \(\kappa_\theta\). Lastly, models are trained with the loss \(\mathcal{L} = \sum_{n=1}^N||\mathcal{F}[u^n(x)] - \mathcal{F}_\theta (\mathbf{u}^n, \mathbf{x})||_2^2\); training is supervised with the functional to evaluate if models can implicitly learn accurate derivatives. 

\begin{table}[]
    \centering
    \begin{tabular}{l  c c c | c c c | c c c }
    \toprule
        & \multicolumn{3}{c}{Base} & \multicolumn{3}{c}{OOD (\(c_i\in[1,3]\))} & \multicolumn{3}{c}{Disc. (\(x_i \in [-2, 2]\))} \\
         Metric & MLP & FNO & IKF  & MLP & FNO & IKF & MLP & FNO & IKF\\
         \midrule
         \(\mathcal{F}_l[u]\) & 2.47e-5 & 2.76e-4 & \cellcolor{mygray}3.00e-16 & 0.046 & 0.268 & \cellcolor{mygray}2.13e-7 & 49.3 & 49.7 & \cellcolor{mygray}0.737 \\
         
         \(\delta\mathcal{F}_l / \delta u\) & 0.083 & 0.066 & \cellcolor{mygray}1.15e-3 & 0.114 & 0.122 & \cellcolor{mygray}1.15e-3 & 2.64 & 2.70 & \cellcolor{mygray}0.077\\ 
         
          \(\mathcal{F}_{nl}[u]\) & 0.029 & 0.016 & \cellcolor{mygray}2.05e-3 & 6709 & 6493 & \cellcolor{mygray}2908 & 381 & 322 & \cellcolor{mygray}55.4 \\ 
         \(\delta\mathcal{F}_{nl} / \delta u\) & 1.33 & 2.10 & \cellcolor{mygray}0.045 & 1730 & 1723 & \cellcolor{mygray}1079 & 92.1 & 88.1 & \cellcolor{mygray}35.5\\ 
    \bottomrule
    \end{tabular}
    \vspace{5pt}
    \caption{Validation MSE is reported across different experiments with linear and nonlinear functionals. Errors are calculated in the scalar \(\mathcal{F}[u]\) and gradient domains \(\delta\mathcal{F} / \delta u\) to evaluate performance in fitting functionals and implicitly learning functional derivatives. Additionally, generalization to OOD inputs and unseen discretizations are evaluated. Parameter counts are: MLP (7.5K), FNO (10.9K) and IKF (4.3K); each experiment is repeated 6 times and errors are reported as the average across seeds.}
    \label{tab:toy}
\end{table}

To further evaluate generalization, after training we consider two cases: (1) \textbf{OOD}, where validation inputs \(u(x)\) are sampled with polynomial coefficients \(c_i\in [1, 3]\) to model out-of-distribution functions, (2) \textbf{Disc.}, where validation inputs \(u(x)\) are uniformly discretized on a larger domain \(x_i \in [-2, 2]\) with a larger \(\Delta x\), while keeping \(M=100\). In each experiment, validation errors are reported with the metric \(\frac{1}{N}\sum_{n=1}^N ||\mathcal{F}[u^n] - \mathcal{F}_\theta(\mathbf{u}^n, \mathbf{x})||_2^2\) or \(\frac{1}{N}\sum_{n=1}^N ||\frac{\delta\mathcal{F}}{\delta u^n} - \text{autograd}(F_\theta,\mathbf{u}^n)||_2^2\) to measure performance in fitting functionals and learning accurate functional derivatives. 

Results are given in Table \ref{tab:toy}. We observe that architectures based on an integral kernel can accurately represent functionals and their derivatives, outperforming other architectures. Despite only training on scalars \(\mathcal{F}[u]\), IKFs can implicitly learn the correct functional derivatives, as plotted in Figure \ref{fig:derivative}. This is new; even on toy problems, traditional architectures do not have well-behaved functional derivatives, as such, learning in the gradient domain for Hamiltonian systems would not be possible. Although the gradient performance is poor, conventional architectures can still fit functionals and the MSE for validation samples within the training distribution (Base) is low. 
When considering generalization performance, IKFs can readily generalize to unseen input functions as well as unseen discretizations in the linear case. This empirically observed generalization is theoretically motivated; the Riesz representation theorem guarantees that a single kernel \(\kappa_\theta\) can represent a linear functional \(\mathcal{F}_l\) for all possible input functions \(u(x)\) (e.g., polynomial, exponential, etc.). Generalization performance with nonlinear functionals is more challenging, as larger polynomial coefficients \(c_i\) and larger coordinates \(x_i\) can quickly increase \(F_{nl}[u(x)]\) due to large exponents. Despite this, IKFs still outperform other architectures in these domains, and learn accurate nonlinear functionals and their derivatives for samples within distribution. 

\begin{figure}[t]
    \centering
    \includegraphics[width=\linewidth]{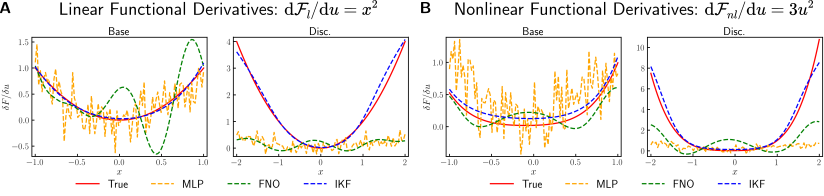}
    \caption{After training to regress \(\mathcal{F}[u]\), the gradients of the network with respect to an input function \(u\) are plotted for different models. \textit{Left, A:} Neural functionals are able to accurately model derivatives, even on unseen discretizations, whereas other architectures fail even on their training distribution. \textit{Right, B:} Nonlinear functionals have more complex derivatives that change with input functions \(u\). In this instance, a quartic function is learned and the model extrapolates to unseen discretizations.}
    \label{fig:derivative}
\end{figure}

\subsection{1D Advection and KdV Equations}

\paragraph{Formulation} To evaluate the Hamiltonian Neural Solver (HNS) on PDE problems, we consider the 1D Advection and the 1D Korteweg–De Vries (KdV) equation. The Hamiltonians for each PDE are given as \cite{miura_kdv, kdv_1}:
\begin{equation}
    \label{eq:ham_adv}
    \mathcal{H}_{adv}[u] = \int_\Omega -\frac{1}{2}(u(x))^2 \text{d}x, \qquad \mathcal{H}_{kdv}[u] = \int_\Omega -\frac{1}{6}(u(x))^3  - u(x)\frac{\partial^2 u}{\partial x^2}(x) \text{d}x,
\end{equation}
Given an initial state \(u^0\), the Hamiltonian for each PDE is conserved over time (i.e., \(\mathcal{H}[u^t] = \mathcal{H}[u^0]\)). For both PDEs, the operator \(\mathcal{J}\) is defined as \(\partial_x\). We can check that Hamilton's equations hold for the 1D Advection equation:
\begin{equation}
    \frac{\delta \mathcal{H}_{adv}}{\delta u} = -u(x), \qquad \mathcal{J}(\frac{\delta \mathcal{H}_{adv}}{\delta u}) = -\frac{\partial u}{\partial x} = \frac{\text{d}u}{\text{d}t}
\end{equation}
which recovers the correct PDE for 1D Advection. A similar calculation can be done to recover the PDE for the 1D KdV equation  (\(u_t + uu_x + u_{xxx} = 0\)) from Hamilton's equations by using \(\frac{\delta \mathcal{H}_{kdv}}{\delta u} = -\frac{1}{2}u^2 - u_{xx}\). Therefore, given a training dataset of spatiotemporal PDE data, the labels \(\frac{\delta \mathcal{H}}{\delta u}\) can be computed using finite differences at every timestep \(t\). The gradients of the HNS network \(\nabla_\mathbf{u}(\mathcal{H}_\theta (\mathbf{x}, \mathbf{u}^t))\) are then fitted to these labels. 

\paragraph{Evaluation} Data for the 1D Advection and KdV equations are generated from a numerical solver using random initial conditions \cite{zhou2024maskedautoencoderspdelearners, brandstetter2022liepointsymmetrydata}:
\begin{equation}
    u^0(x) = \sum_{i=1}^J A_i *\text{sin}(\frac{2\pi l_i*x}{L} + \phi_i)
\end{equation}
where \(L\) is the length of the domain, \(A_i\in[-0.5, 0.5]\), \(l_i\in\{1, 2, 3\}\), and \(\phi_i\in[0, 2\pi]\); \(J=5\) for Advection and \(J=10\) for KdV. The Advection equation is solved on a uniform domain \(x_i \in [0, 16]\) with \(n_x=128\) and the KdV equation is solved on a domain \(x_i \in [0, 100]\) with \(n_x=256\). Both equations have periodic boundary conditions. For the Advection equation, 1024/256 samples are generated for train/validation, and for the KdV equation, 2048/256 samples are generated. To evaluate temporal extrapolation, the training dataset is truncated to a shorter time horizon. For the Advection equation, it is solved from \(t=0s\) to \(t=4s\) during training (\(n_t=200\)) and evaluated from \(t=0s\) to \(t=20s\) during validation (\(n_t=1000\)); a similar scenario is constructed for the KdV equation with \(t=0s\) to \(t=25s\) for training (\(n_t=50\)) and \(t=0s\) to \(t=100s\) for validation (\(n_t=200\)).

\begin{figure}[!t]
\begin{minipage}[c]{\textwidth}
    \begin{minipage}[b]{0.4\textwidth}
        \centering
        \addtolength{\tabcolsep}{-0.3em}
         \begin{tabular}{l l l}
            \toprule
                 & Adv & KdV \\
             Metric: & Roll. Err. \(\downarrow\) & Corr. Time \(\uparrow\) \\ 
             \midrule
             FNO & 0.83\scriptsize$\pm$0.17\normalsize & 68.0\scriptsize$\pm$10.5\normalsize \\ 
             Unet & 0.40\scriptsize$\pm$0.29\normalsize & 134.0\scriptsize$\pm$10.6\normalsize\\ 
             FNO(\(\frac{d\mathbf{u}}{dt}\)) & 0.048\scriptsize$\pm$0.007\normalsize & 77.6\scriptsize$\pm$3.6\normalsize\\ 
             Unet(\(\frac{d\mathbf{u}}{dt}\)) & 0.057\scriptsize$\pm$0.024\normalsize & 113.3\scriptsize$\pm$15.0\normalsize \\ 
             HNS & \cellcolor{mygray}0.0039\scriptsize$\pm$0.0008\normalsize & \cellcolor{mygray}151.1\scriptsize$\pm$3.0\normalsize\\
            \bottomrule
        \end{tabular}
        \captionof{table}{Results for 1D PDEs. Parameter counts are: FNO (65K), Unet (65K), HNS (32K) for Adv, and FNO (135K), Unet (146K), HNS (87K) for KdV. }
        \label{tab:1d}
    \end{minipage}
    \hfill
    \begin{minipage}[b]{0.57\textwidth}
        \centering
        \includegraphics[width=\textwidth,valign=t]{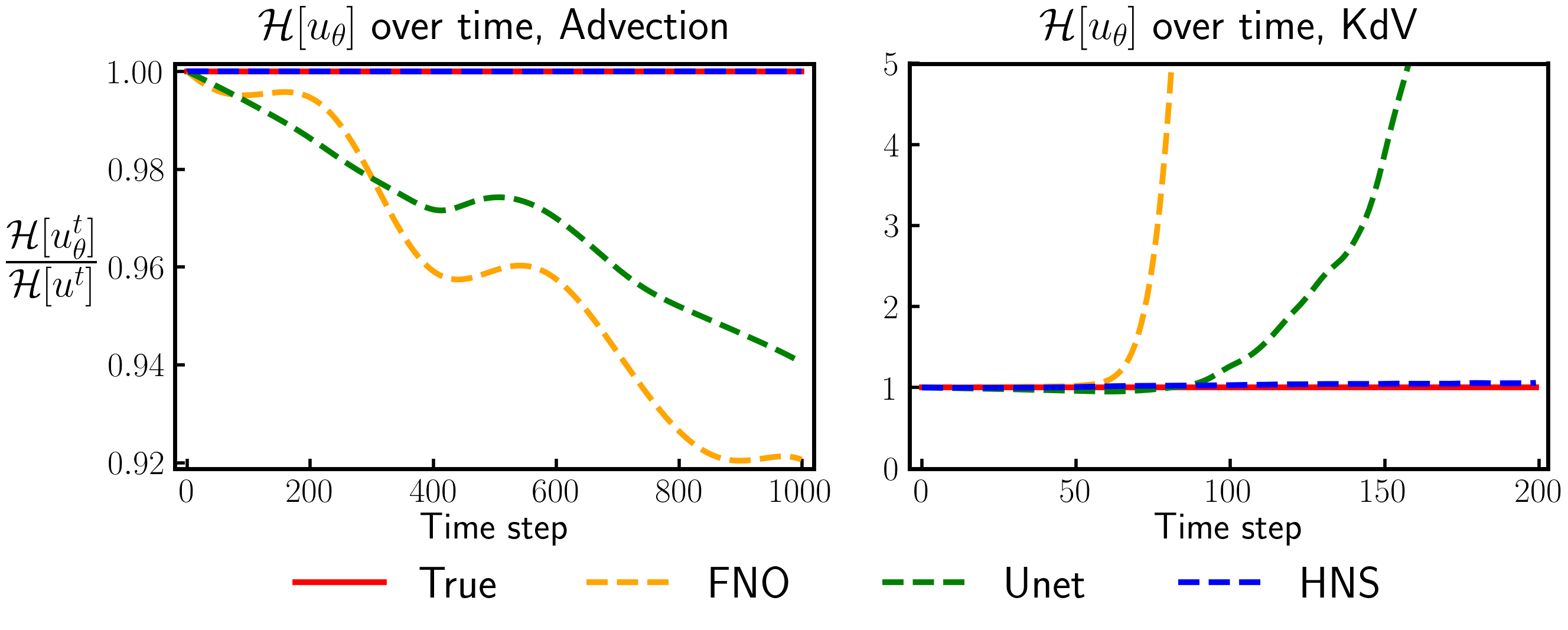}
        \captionof{figure}{Hamiltonians for Adv and KdV for predicted trajectories, plotted over time. FNO and Unet models are shown with their \(d\mathbf{u}/dt\) variants. HNS can better conserve Hamiltonians and remain stable over time.}
        \label{fig:h}
    \end{minipage}
\end{minipage}
\end{figure}

Two baselines are considered, a FNO \cite{li2021fourierneuraloperatorparametric} and a Unet \cite{gupta2022multispatiotemporalscalegeneralizedpdemodeling} model. Additionally, the HNS model is used with a nonlinear, FiLM-conditioned SIREN kernel. Global conditioning is used in the KdV experiments by calculating the scale/shift parameters as \(\gamma_i^{(l)} = [\gamma_\theta^{(l)}(\mathbf{u}^t)]_i\) and \(\beta_i^{(l)} = [\beta_\theta^{(l)}(\mathbf{u}^t)]_i\); the global FiLM networks are each parameterized by a shallow, 1D CNN. We find that global conditioning is needed in more complex PDEs since the kernel output \(z_i = k_\theta(\cdot)\) can depend on more than just \(u_i\) at a given point \(x_i\). In particular, the Hamiltonian for the KdV equation has a \(u_{xx}\) term that requires non-local knowledge of \(\mathbf{u}^t\). Lastly, to ensure a fair comparison, we also train variants of the FNO and Unet baselines that predict temporal derivatives \(\text{d}\mathbf{u}/\text{d}t\) \cite{Zhou_2025}, as opposed to the usual variants that predict \(\mathbf{u}^{t+1}\); these use the same numerical integrator during inference as the HNS model.

The metrics that are reported are the rollout error = \(\frac{1}{T}\sum_{t=1}^T \mathcal{L}(\mathbf{u}^t, \mathbf{u}_\theta ^t)\), and the correlation time = \(\max(t) \:s.t. \: \mathcal{C}(\mathbf{u}^t, \mathbf{u}_\theta ^t) > 0.8\). The loss used is a relative L2 loss, or \(\mathcal{L}(\mathbf{u}^t, \mathbf{u}_\theta ^t) = \frac{||\mathbf{u}^t - \mathbf{u}^t_\theta ||_2^2}{||\mathbf{u}^t||_2^2}\) and the correlation criterion \(\mathcal{C}\) is the Pearson correlation. Rollout error is used to measure the accuracy of the predicted trajectory; for chaotic dynamics, rollout error is often skewed after autoregressive drift, therefore correlation time measures the portion of the trajectory that is accurate. Each metric is averaged over the validation set and the mean and standard deviation over six seeds is reported in Table \ref{tab:1d}. Furthermore, given a predicted trajectory \([\mathbf{u}_\theta^1,\ldots,\mathbf{u}_\theta^T]\) from a model, we can examine its Hamiltonian at each timestep and plot the scalar value \(\mathcal{H}[\mathbf{u}_\theta^t]\) over time, shown in Figure \ref{fig:h}.

We observe that the proposed HNS model is able to outperform current baselines, both on quantitative metrics and when conserving Hamiltonians over time. HNS is also efficient, capable of achieving better performance with nearly half the parameters of baseline models due to sharing weights between input points \(x_i, u_i\). Furthermore, despite training on a shorter time horizon, HNS can extrapolate and remain stable in longer prediction horizons. This is more pronounced in the KdV equation, where initial states are smoother than later, more chaotic states. An example trajectory is shown in Figure \ref{fig:kdv}, where the baselines struggle to predict later states since these are not seen during training. 

\subsection{2D Shallow Water Equations}
\paragraph{Formulation} To evaluate HNS on a more complex system, we consider the 2D Shallow Water Equations (SWE). These equations are an approximation of the Navier-Stokes equations when the horizontal length scale is much larger than the vertical length scale. We consider its conservative form, which is often used as a simple model for atmosphere or ocean dynamics. 

For vector-valued functions \(\mathbf{u}(x,y) \in \mathbb{R}^{d}\), the operator \(\mathcal{J}\) becomes a \(d\times d\) matrix of operators. For the 2D Shallow Water Equations, the operator matrix \(\mathcal{J}\) is given by \cite{ALLEN1996229, Ripa1993, Dellar2003}: 
\begin{equation}
\label{eqn:J_swe}
    \mathcal{J} = \begin{bmatrix}
0 & -q & \partial_x\\
q & 0 & \partial_y \\ 
\partial_x & \partial_y & 0
\end{bmatrix}, \qquad q = \frac{\partial v_x}{\partial y} - \frac{\partial v_y}{\partial x}
\end{equation}
where \(q\) is the vorticity. The Hamiltonian for this system is: 
\begin{equation}
    \label{eq:ham_swe}
    \mathcal{H}[\mathbf{u}] = \mathcal{H}[v_x, v_y,h] = \int_\Omega \frac{1}{2}h(v_x^2 + v_y^2 ) + \frac{1}{2}gh^2 \text{d}A
\end{equation}
where \(g\) is the gravitational constant and the velocities and height \(v_x, v_y, h\) are usually defined on a grid (i.e., \(v_x = v_x(x_i, y_j)\)). One can verify Hamilton's equations and that \(\mathcal{J}\frac{\delta\mathcal{H}}{\delta\mathbf{u}}\) recovers the 2D Shallow Water Equations:
\begin{equation}
    \partial_th + \nabla\cdot (\mathbf{v}h) = 0, \qquad \partial_t\mathbf{v} + \mathbf{v}\cdot\nabla\mathbf{v}=-g\nabla h
\end{equation}
\paragraph{Evaluation} Data for the 2D SWE system is generated from PyClaw \cite{clawpack, pyclaw-sisc, mandli2016clawpack} using random sinusoidal initial conditions (Sines) for the height:
\begin{equation}
    h^0(x,y)= H + \sum_{i=1}^J A_i *\text{sin}(\frac{2\pi m_i*x}{L_x} + \frac{2\pi n_i* y}{L_y} + \phi_i)
\end{equation}
with initial velocities set to zero. Constants are randomly sampled from \(A_i\in[-0.1, 0.1]\), \(m_i, n_i \in \{1, 2, 3\}\), and \(\phi_i\in[0, 2\pi]\) with \(J=5\), \(H=1\), and \(L_x=L_y=2\). The domain is \((x_i,y_i)\in [-1, 1]^2\) with \(n_x=n_y=128\); each sample has \(n_t=100\) timesteps from \(t=0s\) to \(t=2s\). The gravitational constant is set to \(g=1\); 256 samples are generated for training and 64 samples are generated for validation. Periodic boundary conditions are used. 
\begin{figure}[!t]
\begin{minipage}[c]{\textwidth}
    \begin{minipage}[b]{0.37\textwidth}
        \centering
        \addtolength{\tabcolsep}{-0.3em}
         \begin{tabular}{l l l} 
            \toprule
               Model  & Sines & Pulse \\ 
             \midrule
           Transolver & 0.084\scriptsize$\pm$0.0081\normalsize & 0.122\scriptsize$\pm$0.0074\normalsize \\ 
             FNO & 0.057\scriptsize$\pm$0.002\normalsize & 0.117\scriptsize$\pm$0.0009\normalsize \\ 
              PINO & 0.053\scriptsize$\pm$0.005\normalsize & 0.114\scriptsize$\pm$0.0003\normalsize \\ 
             Unet & \cellcolor{mygray}0.010\scriptsize$\pm$0.0014\normalsize & 0.042\scriptsize$\pm$0.0006\normalsize\\ 
             HNS & 0.026\scriptsize$\pm$0.0003\normalsize & \cellcolor{mygray}0.021\scriptsize$\pm$0.0015\normalsize\\
            \bottomrule
        \end{tabular}
        \captionof{table}{Rollout errors for 2D SWE. Parameter counts are: FNO/PINO (7M), Transolver (4M), Unet (3M), HNS (3M). Models are evaluated on in-distribution ICs (Sines) or out-of-distribution ICs (Pulse).}
        \label{tab:2d}
    \end{minipage}
    \hfill
    \begin{minipage}[b]{0.6\textwidth}
        \centering
        \includegraphics[width=\textwidth,valign=t]{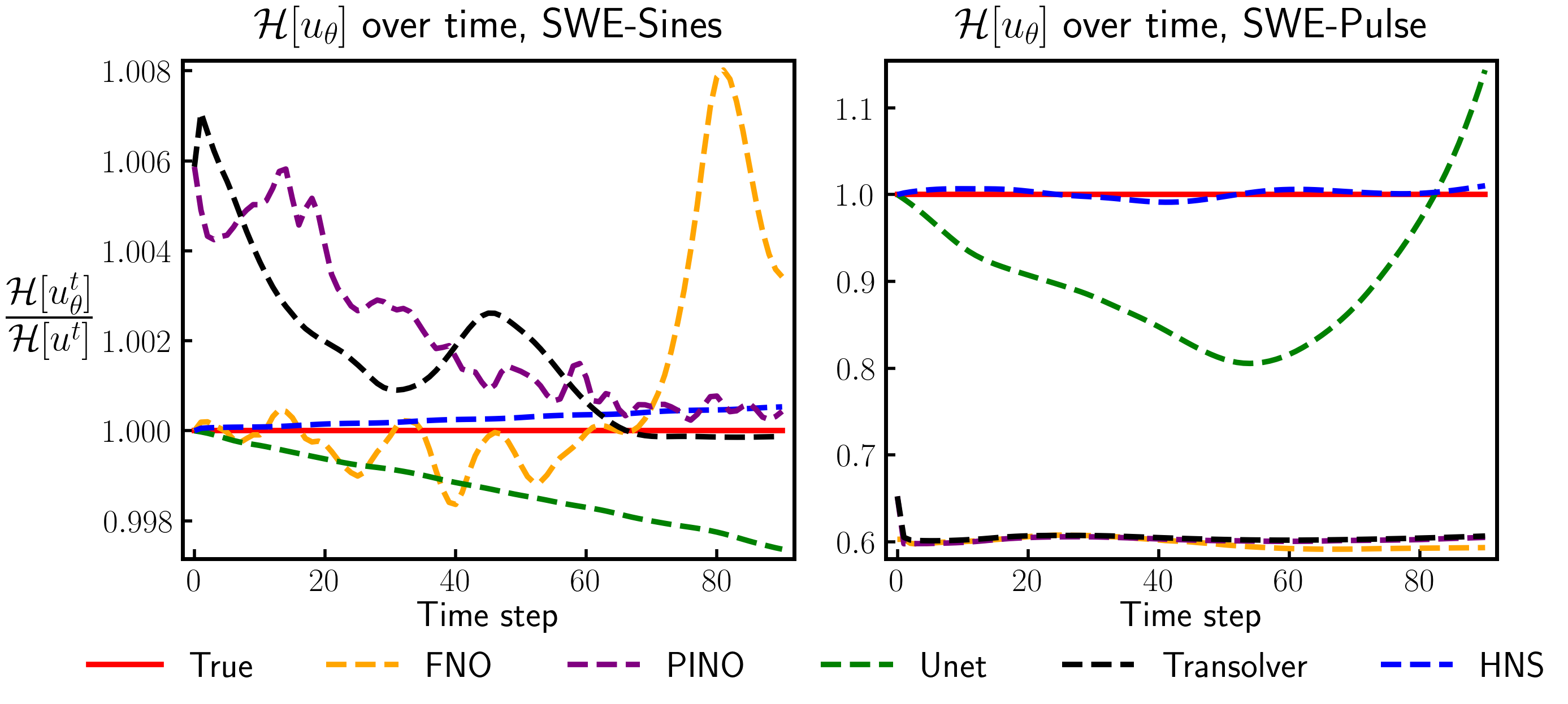}
        \captionof{figure}{Hamiltonians for Sines and Pulse ICs, plotted over time. Despite potentially larger rollout errors, HNS can better conserve the Hamiltonian of predicted solutions.}
        \label{fig:h_swe}
    \end{minipage}
\end{minipage}
\end{figure}
To evaluate model generalization, we generate an additional test set of 64 samples with a random Gaussian pulse as initial conditions (Pulse). The initial height is set to an isotropic Gaussian centered at the origin with random covariance \(\sigma \in [0.1, 0.5]\) and the height is scaled to be between \([1, 1.5]\); initial velocities are zero. The domain, BCs, and discretizations remain the same. 

We consider benchmarking against Unet and FNO models, as well as PINO \cite{li2023physicsinformedneuraloperatorlearning} and Transolver \cite{wu2024transolverfasttransformersolver} models to compare against a physics-informed method and a newer attention-based surrogate. We instantiate the HNS model with a global, FiLM-conditioned SIREN kernel. Validation rollout errors for in-distribution ICs (Sines) and out-of-distribution ICs (Pulse) are given in Table \ref{tab:2d}, where means and standard deviations are calculated over three seeds. Additionally, the Hamiltonian over time for model rollouts are plotted in Figure \ref{fig:h_swe}. The derivative variants of baselines did not train stably, possibly due to sharp gradients that develop in the absence of viscosity, and are not reported. Additionally, the baselines are trained with the benefit of normalization, as the height channel has a different scale than the velocity channels; however, since the Hamiltonian is not invariant under scaling and shifting, normalization was not performed for the HNS model. 

For test samples within the training distribution (Sines), conventional neural surrogates are able to perform well and achieve low loss, and HNS is on-par with these models. Adding a physics-informed loss in PINO only marginally increases the performance of FNO, which is consistent with its interpretation as a soft regularizer. When looking at the Hamiltonian of predicted solutions, we find that HNS can predict solutions that better conserve physical invariances. We validate this in Appendix \ref{app:ham} by calculating the error of the Hamiltonian of predicted trajectories across different experiments. This ability to conserve the Hamiltonian helps HNS to generalize to unseen initial conditions, and it achieves better performance on the Gaussian Pulse test set. Interestingly, it also conserves the Hamiltonian on out-of-distribution inputs, suggesting a strong inductive bias in the model. To understand what mechanisms enable energy conservation, we also ablate different methods within HNS to understand their contributions, shown in Appendix \ref{app:ham}. Lastly, to visualize model performance, predicted trajectories of Sines and Pulse ICs are shown in Figures \ref{fig:sines} and \ref{fig:pulse}. 

\section{Related Works} 
\paragraph{Hamiltonian Neural Networks} Original work on Hamiltonian Neural Networks (HNNs) \cite{greydanus2019hamiltonianneuralnetworks, onlearninghamiltonian} established them as a promising architecture to respect physical laws, while later work improved their performance \cite{DAVID2023112495, HORN2025113536, Zhong2020Symplectic, Chen2020Symplectic, sanchezgonzalez2019hamiltoniangraphnetworksode}, investigated the mechanisms behind HNNs \cite{gruver2022deconstructinginductivebiaseshamiltonian}, or used HNNs in different applications \cite{Mattheakis_2022, Toth2020Hamiltonian, zhong2021extendinglagrangianhamiltonianneural}. A closely related set of works develop neural networks in a Lagrangian framework \cite{cranmer2019lagrangian, lutter2019deeplagrangiannetworksusing}, which also preserve learned energies. More broadly, learning an energy and optimizing its gradient is an established framework in ML-based molecular dynamics simulation \cite{GDML, gasteiger2022directionalmessagepassingmolecular, schütt2017schnetcontinuousfilterconvolutionalneural, zhang2018endtoendsymmetrypreservinginteratomic, Batzner_2022}. Despite research in this area, no prior works have considered learning with infinite-dimensional Hamiltonian systems and most HNNs have only been demonstrated in simple, particle-based systems.

\paragraph{Functional Approximation} While uncommon, there are prior works that investigate learning infinite-dimensional to scalar/vector maps. Most methods are based on a kernel integral and are used to fit functionals \cite{hu2023neural}, construct functional autoencoders \cite{bunker2025autoencodersfunctionspace}, model parameter-to-observable maps \cite{Huang_2025}, or build infinite-dimensional discriminators for generative modeling \cite{rahman2022generative}. The use of a kernel integral to model a functional is also well studied in machine learning for DFT, since the Hohenberg-Kohn Theorems guarantee that the total energy is the integral of an unknown function of electron density \cite{Pederson_2022, li2023dft, PhysRevMaterials.6.040301}. Lastly, a separate approach considers modeling functionals through the cylindrical approximation, rather than a kernel integral \cite{miyagawa2024physicsinformed}. 

\paragraph{Neural PDE Solvers} Neural PDE solvers are a growing field, with many works proposing architectures to improve model accuracy \cite{lippe2023pderefinerachievingaccuratelong, wu2024transolverfasttransformersolver, alkin2025universalphysicstransformersframework}, generalizability \cite{zhou2024strategiespretrainingneuraloperators, hao2024dpotautoregressivedenoisingoperator,  herde2024poseidonefficientfoundationmodels}, or stability \cite{zhou2025text2pdelatentdiffusionmodels, li2022learningdissipativedynamicschaotic, LIST2025117441,li2025generativelatentneuralpde}. Specific to our work, many PDE solvers have also employed physics-based prior knowledge to improve model performance and adherence to governing laws. The most straightforward approach is to use physics-informed losses \cite{RAISSI2019686, li2023physicsinformedneuraloperatorlearning, wang2021learningsolutionoperatorparametric}, but another, more fundamental approach is to enforce physical symmetries and equivariances in the network itself \cite{shumaylov2025liealgebracanonicalizationequivariant, wang2021incorporating, brandstetter2023cliffordneurallayerspde, akhoundsadegh2023liepointsymmetryphysics, horie2023physicsembeddedneuralnetworksgraph}. Interestingly, by Noether's theorem, every symmetry has an associated conservation law, which HNS relies on as an inductive bias; through this lens the current work is another way of softly enforcing symmetries in predictions. 

\section{Conclusion}
\paragraph{Limitations} Despite an interesting theoretical basis and its novelty, Hamiltonian Neural Solvers have limitations that we hope future work can address. Firstly, the theory for nonlinear functionals is underdeveloped. There is also additional overhead during inference to backpropagate through the network and evaluate the linear operator \(\mathcal{J}\). Scalability is also a concern due to the reliance on numerical quadratures. This increase in runtime is quantified in Appendix \ref{app:timing}

Additionally, implementing \(\mathcal{J}\) without a neural operator may require care to avoid numerical errors; this is further discussed in Appendix \ref{app:numerical}. Furthermore, HNS is only applicable to Hamiltonian systems; this would exclude systems with dissipation, although modifications exist to extend Hamiltonian mechanics to non-conservative systems \cite{sosanya2022dissipativehamiltonianneuralnetworks, hh_decomp, Sanders_DeVoria_Washuta_Elamin_Skenes_Berlinghieri_2024, Figotin_2007_dissipative}. Even for conservative PDEs, implementing a Hamiltonian structure is not as straightforward as training a neural solver with next-step prediction. Lastly, while not a limitation, an interesting observation is that introducing a loss on \(\mathcal{H}[\mathbf{u}^t] - \mathcal{H}[\mathbf{u}_\theta^t]\) harms HNS performance; we hypothesize that since the Hamiltonian is conserved over time, learning the identity mapping is the easiest way to minimize this error and thus degrades model training.

\paragraph{Outlook} In this work, we have proposed a novel method for designing neural PDE solvers that respect conservation laws. We verify that kernel integrals are able to implicitly learn functional derivatives, as well as propose parametrization using neural fields. Using the resulting architecture in a Hamiltonian framework allows stable and energy-conserving predictions of 1D and 2D PDEs. These capabilities enable Hamiltonian Neural Solvers to generalize to certain out-of-distribution inputs, such as a longer time horizon or unseen initial conditions. 

While learning functions and operators are dominant paradigms, approximating functionals may also have interesting uses. Functionals provide concise descriptions of physical systems by integrating physical variables, such as describing the energy of a molecule or the drag of an airfoil. Although concise, functional derivatives are often more relevant and can contain dynamical information or perhaps be used for optimization. We hope that future work can continue to improve functional approximators as well as take advantage of these models in new and insightful applications.

\newpage

\begin{ack}
We would like to thank Dr. Amit Acharya for his insightful discussions and help in conceptualizing this work.
\end{ack}
\bibliography{main}

\begin{thebibliography}{88}
\providecommand{\natexlab}[1]{#1}
\providecommand{\url}[1]{\texttt{#1}}
\expandafter\ifx\csname urlstyle\endcsname\relax
  \providecommand{\doi}[1]{doi: #1}\else
  \providecommand{\doi}{doi: \begingroup \urlstyle{rm}\Url}\fi

\bibitem[Marsden and Ratiu(1999)]{marsden_book}
Jerrold~E. Marsden and Tudor~S. Ratiu.
\newblock \emph{Introduction to Mechanics and Symmetry}.
\newblock Springer New York, NY, 1999.

\bibitem[Hu et~al.(2023)Hu, Li, Shi, Zheng, Vignale, Kawaguchi, YAN, and Lin]{hu2023neural}
Zheyuan Hu, Tianbo Li, Zekun Shi, Kunhao Zheng, Giovanni Vignale, Kenji Kawaguchi, Shuicheng YAN, and Min Lin.
\newblock Neural integral functionals.
\newblock In \emph{ICLR 2023 Workshop on Physics for Machine Learning}, 2023.
\newblock URL \url{https://openreview.net/forum?id=aQuqw6eVKP}.

\bibitem[Huang et~al.(2025)Huang, Nelsen, and Trautner]{Huang_2025}
Daniel~Zhengyu Huang, Nicholas~H. Nelsen, and Margaret Trautner.
\newblock An operator learning perspective on parameter-to-observable maps.
\newblock \emph{Foundations of Data Science}, 7\penalty0 (1):\penalty0 163–225, 2025.
\newblock ISSN 2639-8001.
\newblock \doi{10.3934/fods.2024037}.
\newblock URL \url{http://dx.doi.org/10.3934/fods.2024037}.

\bibitem[Bunker et~al.(2025)Bunker, Girolami, Lambley, Stuart, and Sullivan]{bunker2025autoencodersfunctionspace}
Justin Bunker, Mark Girolami, Hefin Lambley, Andrew~M. Stuart, and T.~J. Sullivan.
\newblock Autoencoders in function space, 2025.
\newblock URL \url{https://arxiv.org/abs/2408.01362}.

\bibitem[Rahman et~al.(2022)Rahman, Florez, Anandkumar, Ross, and Azizzadenesheli]{rahman2022generative}
Md~Ashiqur Rahman, Manuel~A Florez, Anima Anandkumar, Zachary~E Ross, and Kamyar Azizzadenesheli.
\newblock Generative adversarial neural operators.
\newblock \emph{Transactions on Machine Learning Research}, 2022.
\newblock ISSN 2835-8856.
\newblock URL \url{https://openreview.net/forum?id=X1VzbBU6xZ}.

\bibitem[Kreyszig(1991)]{kreyszig1991introductory}
E.~Kreyszig.
\newblock \emph{Introductory Functional Analysis with Applications}.
\newblock Wiley Classics Library. Wiley, 1991.
\newblock ISBN 9780471504597.
\newblock URL \url{https://books.google.com/books?id=AQtMEAAAQBAJ}.

\bibitem[Pinkus(1999)]{Pinkus_1999}
Allan Pinkus.
\newblock Approximation theory of the mlp model in neural networks.
\newblock \emph{Acta Numerica}, 8:\penalty0 143–195, 1999.
\newblock \doi{10.1017/S0962492900002919}.

\bibitem[Leshno et~al.(1993)Leshno, Lin, Pinkus, and Schocken]{LESHNO1993861}
Moshe Leshno, Vladimir~Ya. Lin, Allan Pinkus, and Shimon Schocken.
\newblock Multilayer feedforward networks with a nonpolynomial activation function can approximate any function.
\newblock \emph{Neural Networks}, 6\penalty0 (6):\penalty0 861--867, 1993.
\newblock ISSN 0893-6080.
\newblock \doi{https://doi.org/10.1016/S0893-6080(05)80131-5}.
\newblock URL \url{https://www.sciencedirect.com/science/article/pii/S0893608005801315}.

\bibitem[Kovachki et~al.(2023)Kovachki, Li, Liu, Azizzadenesheli, Bhattacharya, Stuart, and Anandkumar]{kovachki_neuraloperator}
Nikola Kovachki, Zongyi Li, Burigede Liu, Kamyar Azizzadenesheli, Kaushik Bhattacharya, Andrew Stuart, and Anima Anandkumar.
\newblock Neural operator: Learning maps between function spaces with applications to pdes.
\newblock \emph{Journal of Machine Learning Research}, 24\penalty0 (89):\penalty0 1--97, 2023.
\newblock URL \url{http://jmlr.org/papers/v24/21-1524.html}.

\bibitem[Li et~al.(2020{\natexlab{a}})Li, Kovachki, Azizzadenesheli, Liu, Bhattacharya, Stuart, and Anandkumar]{li2020neuraloperatorgraphkernel}
Zongyi Li, Nikola Kovachki, Kamyar Azizzadenesheli, Burigede Liu, Kaushik Bhattacharya, Andrew Stuart, and Anima Anandkumar.
\newblock Neural operator: Graph kernel network for partial differential equations, 2020{\natexlab{a}}.
\newblock URL \url{https://arxiv.org/abs/2003.03485}.

\bibitem[Li et~al.(2020{\natexlab{b}})Li, Kovachki, Azizzadenesheli, Liu, Stuart, Bhattacharya, and Anandkumar]{multipole}
Zongyi Li, Nikola Kovachki, Kamyar Azizzadenesheli, Burigede Liu, Andrew Stuart, Kaushik Bhattacharya, and Anima Anandkumar.
\newblock Multipole graph neural operator for parametric partial differential equations.
\newblock In H.~Larochelle, M.~Ranzato, R.~Hadsell, M.F. Balcan, and H.~Lin, editors, \emph{Advances in Neural Information Processing Systems}, volume~33, pages 6755--6766. Curran Associates, Inc., 2020{\natexlab{b}}.
\newblock URL \url{https://proceedings.neurips.cc/paper_files/paper/2020/file/4b21cf96d4cf612f239a6c322b10c8fe-Paper.pdf}.

\bibitem[Li et~al.(2023{\natexlab{a}})Li, Kovachki, Choy, Li, Kossaifi, Otta, Nabian, Stadler, Hundt, Azizzadenesheli, and Anandkumar]{li2023geometryinformedneuraloperatorlargescale}
Zongyi Li, Nikola~Borislavov Kovachki, Chris Choy, Boyi Li, Jean Kossaifi, Shourya~Prakash Otta, Mohammad~Amin Nabian, Maximilian Stadler, Christian Hundt, Kamyar Azizzadenesheli, and Anima Anandkumar.
\newblock Geometry-informed neural operator for large-scale 3d pdes, 2023{\natexlab{a}}.
\newblock URL \url{https://arxiv.org/abs/2309.00583}.

\bibitem[Li et~al.(2021)Li, Kovachki, Azizzadenesheli, Liu, Bhattacharya, Stuart, and Anandkumar]{li2021fourierneuraloperatorparametric}
Zongyi Li, Nikola Kovachki, Kamyar Azizzadenesheli, Burigede Liu, Kaushik Bhattacharya, Andrew Stuart, and Anima Anandkumar.
\newblock Fourier neural operator for parametric partial differential equations, 2021.
\newblock URL \url{https://arxiv.org/abs/2010.08895}.

\bibitem[Xie et~al.(2022)Xie, Takikawa, Saito, Litany, Yan, Khan, Tombari, Tompkin, Sitzmann, and Sridhar]{xie2022neuralfieldsvisualcomputing}
Yiheng Xie, Towaki Takikawa, Shunsuke Saito, Or~Litany, Shiqin Yan, Numair Khan, Federico Tombari, James Tompkin, Vincent Sitzmann, and Srinath Sridhar.
\newblock Neural fields in visual computing and beyond, 2022.
\newblock URL \url{https://arxiv.org/abs/2111.11426}.

\bibitem[Peng et~al.(2020)Peng, Niemeyer, Mescheder, Pollefeys, and Geiger]{peng2020convolutionaloccupancynetworks}
Songyou Peng, Michael Niemeyer, Lars Mescheder, Marc Pollefeys, and Andreas Geiger.
\newblock Convolutional occupancy networks, 2020.
\newblock URL \url{https://arxiv.org/abs/2003.04618}.

\bibitem[Mehta et~al.(2021)Mehta, Gharbi, Barnes, Shechtman, Ramamoorthi, and Chandraker]{mehta2021modulatedperiodicactivationsgeneralizable}
Ishit Mehta, Michaël Gharbi, Connelly Barnes, Eli Shechtman, Ravi Ramamoorthi, and Manmohan Chandraker.
\newblock Modulated periodic activations for generalizable local functional representations, 2021.
\newblock URL \url{https://arxiv.org/abs/2104.03960}.

\bibitem[Park et~al.(2019)Park, Florence, Straub, Newcombe, and Lovegrove]{park2019deepsdflearningcontinuoussigned}
Jeong~Joon Park, Peter Florence, Julian Straub, Richard Newcombe, and Steven Lovegrove.
\newblock Deepsdf: Learning continuous signed distance functions for shape representation, 2019.
\newblock URL \url{https://arxiv.org/abs/1901.05103}.

\bibitem[Mescheder et~al.(2019)Mescheder, Oechsle, Niemeyer, Nowozin, and Geiger]{mescheder2019occupancynetworkslearning3d}
Lars Mescheder, Michael Oechsle, Michael Niemeyer, Sebastian Nowozin, and Andreas Geiger.
\newblock Occupancy networks: Learning 3d reconstruction in function space, 2019.
\newblock URL \url{https://arxiv.org/abs/1812.03828}.

\bibitem[Perez et~al.(2017)Perez, Strub, de~Vries, Dumoulin, and Courville]{perez2017filmvisualreasoninggeneral}
Ethan Perez, Florian Strub, Harm de~Vries, Vincent Dumoulin, and Aaron Courville.
\newblock Film: Visual reasoning with a general conditioning layer, 2017.
\newblock URL \url{https://arxiv.org/abs/1709.07871}.

\bibitem[Sitzmann et~al.(2020)Sitzmann, Martel, Bergman, Lindell, and Wetzstein]{sitzmann2020implicitneuralrepresentationsperiodic}
Vincent Sitzmann, Julien N.~P. Martel, Alexander~W. Bergman, David~B. Lindell, and Gordon Wetzstein.
\newblock Implicit neural representations with periodic activation functions, 2020.
\newblock URL \url{https://arxiv.org/abs/2006.09661}.

\bibitem[Miura(1976)]{miura_kdv}
Robert~M. Miura.
\newblock The korteweg–devries equation: A survey of results.
\newblock \emph{SIAM Review}, 18\penalty0 (3):\penalty0 412--459, 1976.
\newblock \doi{10.1137/1018076}.
\newblock URL \url{https://doi.org/10.1137/1018076}.

\bibitem[Miller(2018)]{kdv_1}
Peter~D. Miller.
\newblock The korteweg-de vries equation, 2018.
\newblock URL \url{https://dept.math.lsa.umich.edu/~millerpd/docs/651_Winter18/Topic02-651-W18.pdf}.

\bibitem[Zhou and Farimani(2024)]{zhou2024maskedautoencoderspdelearners}
Anthony Zhou and Amir~Barati Farimani.
\newblock Masked autoencoders are pde learners, 2024.
\newblock URL \url{https://arxiv.org/abs/2403.17728}.

\bibitem[Brandstetter et~al.(2022)Brandstetter, Welling, and Worrall]{brandstetter2022liepointsymmetrydata}
Johannes Brandstetter, Max Welling, and Daniel~E. Worrall.
\newblock Lie point symmetry data augmentation for neural pde solvers, 2022.
\newblock URL \url{https://arxiv.org/abs/2202.07643}.

\bibitem[Gupta and Brandstetter(2022)]{gupta2022multispatiotemporalscalegeneralizedpdemodeling}
Jayesh~K. Gupta and Johannes Brandstetter.
\newblock Towards multi-spatiotemporal-scale generalized pde modeling, 2022.
\newblock URL \url{https://arxiv.org/abs/2209.15616}.

\bibitem[Zhou and Barati~Farimani(2025)]{Zhou_2025}
Anthony Zhou and Amir Barati~Farimani.
\newblock Predicting change, not states: An alternate framework for neural pde surrogates.
\newblock \emph{Computer Methods in Applied Mechanics and Engineering}, 441:\penalty0 117990, June 2025.
\newblock ISSN 0045-7825.
\newblock \doi{10.1016/j.cma.2025.117990}.
\newblock URL \url{http://dx.doi.org/10.1016/j.cma.2025.117990}.

\bibitem[Allen and Holm(1996)]{ALLEN1996229}
J.S. Allen and Darryl~D. Holm.
\newblock Extended-geostrophic hamiltonian models for rotating shallow water motion.
\newblock \emph{Physica D: Nonlinear Phenomena}, 98\penalty0 (2):\penalty0 229--248, 1996.
\newblock ISSN 0167-2789.
\newblock \doi{https://doi.org/10.1016/0167-2789(96)00120-0}.
\newblock URL \url{https://www.sciencedirect.com/science/article/pii/0167278996001200}.
\newblock Nonlinear Phenomena in Ocean Dynamics.

\bibitem[Ripa(1993)]{Ripa1993}
P.~Ripa.
\newblock Conservation laws for primitive equations models with inhomogeneous layers.
\newblock \emph{Geophysical \& Astrophysical Fluid Dynamics}, 70:\penalty0 85--111, 6 1993.
\newblock ISSN 10290419.
\newblock \doi{10.1080/03091929308203588}.

\bibitem[Dellar(2003)]{Dellar2003}
Paul~J. Dellar.
\newblock Common hamiltonian structure of the shallow water equations with horizontal temperature gradients and magnetic fields.
\newblock \emph{Physics of Fluids}, 15:\penalty0 292--297, 2003.
\newblock ISSN 10706631.
\newblock \doi{10.1063/1.1530576}.

\bibitem[{Clawpack Development Team}(2024)]{clawpack}
{Clawpack Development Team}.
\newblock Clawpack software, 2024.
\newblock URL \url{http://www.clawpack.org}.
\newblock Version 5.11.0.

\bibitem[Ketcheson et~al.(2012)Ketcheson, Mandli, Ahmadia, Alghamdi, {Quezada de Luna}, Parsani, Knepley, and Emmett]{pyclaw-sisc}
David~I. Ketcheson, Kyle~T. Mandli, Aron~J. Ahmadia, Amal Alghamdi, Manuel {Quezada de Luna}, Matteo Parsani, Matthew~G. Knepley, and Matthew Emmett.
\newblock {PyClaw: Accessible, Extensible, Scalable Tools for Wave Propagation Problems}.
\newblock \emph{SIAM Journal on Scientific Computing}, 34\penalty0 (4):\penalty0 C210--C231, November 2012.

\bibitem[Mandli et~al.(2016)Mandli, Ahmadia, Berger, Calhoun, George, Hadjimichael, Ketcheson, Lemoine, and LeVeque]{mandli2016clawpack}
Kyle~T Mandli, Aron~J Ahmadia, Marsha Berger, Donna Calhoun, David~L George, Yiannis Hadjimichael, David~I Ketcheson, Grady~I Lemoine, and Randall~J LeVeque.
\newblock Clawpack: building an open source ecosystem for solving hyperbolic pdes.
\newblock \emph{PeerJ Computer Science}, 2:\penalty0 e68, 2016.
\newblock \doi{10.7717/peerj-cs.68}.

\bibitem[Li et~al.(2023{\natexlab{b}})Li, Zheng, Kovachki, Jin, Chen, Liu, Azizzadenesheli, and Anandkumar]{li2023physicsinformedneuraloperatorlearning}
Zongyi Li, Hongkai Zheng, Nikola Kovachki, David Jin, Haoxuan Chen, Burigede Liu, Kamyar Azizzadenesheli, and Anima Anandkumar.
\newblock Physics-informed neural operator for learning partial differential equations, 2023{\natexlab{b}}.
\newblock URL \url{https://arxiv.org/abs/2111.03794}.

\bibitem[Wu et~al.(2024)Wu, Luo, Wang, Wang, and Long]{wu2024transolverfasttransformersolver}
Haixu Wu, Huakun Luo, Haowen Wang, Jianmin Wang, and Mingsheng Long.
\newblock Transolver: A fast transformer solver for pdes on general geometries, 2024.
\newblock URL \url{https://arxiv.org/abs/2402.02366}.

\bibitem[Greydanus et~al.(2019)Greydanus, Dzamba, and Yosinski]{greydanus2019hamiltonianneuralnetworks}
Sam Greydanus, Misko Dzamba, and Jason Yosinski.
\newblock Hamiltonian neural networks, 2019.
\newblock URL \url{https://arxiv.org/abs/1906.01563}.

\bibitem[Bertalan et~al.(2019)Bertalan, Dietrich, Mezić, and Kevrekidis]{onlearninghamiltonian}
Tom Bertalan, Felix Dietrich, Igor Mezić, and Ioannis~G. Kevrekidis.
\newblock On learning hamiltonian systems from data.
\newblock \emph{Chaos: An Interdisciplinary Journal of Nonlinear Science}, 29\penalty0 (12):\penalty0 121107, 12 2019.
\newblock ISSN 1054-1500.
\newblock \doi{10.1063/1.5128231}.
\newblock URL \url{https://doi.org/10.1063/1.5128231}.

\bibitem[David and Méhats(2023)]{DAVID2023112495}
Marco David and Florian Méhats.
\newblock Symplectic learning for hamiltonian neural networks.
\newblock \emph{Journal of Computational Physics}, 494:\penalty0 112495, 2023.
\newblock ISSN 0021-9991.
\newblock \doi{https://doi.org/10.1016/j.jcp.2023.112495}.
\newblock URL \url{https://www.sciencedirect.com/science/article/pii/S0021999123005909}.

\bibitem[Horn et~al.(2025)Horn, {Saz Ulibarrena}, Koren, and {Portegies Zwart}]{HORN2025113536}
Philipp Horn, Veronica {Saz Ulibarrena}, Barry Koren, and Simon {Portegies Zwart}.
\newblock A generalized framework of neural networks for hamiltonian systems.
\newblock \emph{Journal of Computational Physics}, 521:\penalty0 113536, 2025.
\newblock ISSN 0021-9991.
\newblock \doi{https://doi.org/10.1016/j.jcp.2024.113536}.
\newblock URL \url{https://www.sciencedirect.com/science/article/pii/S0021999124007848}.

\bibitem[Zhong et~al.(2020)Zhong, Dey, and Chakraborty]{Zhong2020Symplectic}
Yaofeng~Desmond Zhong, Biswadip Dey, and Amit Chakraborty.
\newblock Symplectic ode-net: Learning hamiltonian dynamics with control.
\newblock In \emph{International Conference on Learning Representations}, 2020.
\newblock URL \url{https://openreview.net/forum?id=ryxmb1rKDS}.

\bibitem[Chen et~al.(2020)Chen, Zhang, Arjovsky, and Bottou]{Chen2020Symplectic}
Zhengdao Chen, Jianyu Zhang, Martin Arjovsky, and Léon Bottou.
\newblock Symplectic recurrent neural networks.
\newblock In \emph{International Conference on Learning Representations}, 2020.
\newblock URL \url{https://openreview.net/forum?id=BkgYPREtPr}.

\bibitem[Sanchez-Gonzalez et~al.(2019)Sanchez-Gonzalez, Bapst, Cranmer, and Battaglia]{sanchezgonzalez2019hamiltoniangraphnetworksode}
Alvaro Sanchez-Gonzalez, Victor Bapst, Kyle Cranmer, and Peter Battaglia.
\newblock Hamiltonian graph networks with ode integrators, 2019.
\newblock URL \url{https://arxiv.org/abs/1909.12790}.

\bibitem[Gruver et~al.(2022)Gruver, Finzi, Stanton, and Wilson]{gruver2022deconstructinginductivebiaseshamiltonian}
Nate Gruver, Marc Finzi, Samuel Stanton, and Andrew~Gordon Wilson.
\newblock Deconstructing the inductive biases of hamiltonian neural networks, 2022.
\newblock URL \url{https://arxiv.org/abs/2202.04836}.

\bibitem[Mattheakis et~al.(2022)Mattheakis, Sondak, Dogra, and Protopapas]{Mattheakis_2022}
Marios Mattheakis, David Sondak, Akshunna~S. Dogra, and Pavlos Protopapas.
\newblock Hamiltonian neural networks for solving equations of motion.
\newblock \emph{Physical Review E}, 105\penalty0 (6), June 2022.
\newblock ISSN 2470-0053.
\newblock \doi{10.1103/physreve.105.065305}.
\newblock URL \url{http://dx.doi.org/10.1103/PhysRevE.105.065305}.

\bibitem[Toth et~al.(2020)Toth, Rezende, Jaegle, Racanière, Botev, and Higgins]{Toth2020Hamiltonian}
Peter Toth, Danilo~J. Rezende, Andrew Jaegle, Sébastien Racanière, Aleksandar Botev, and Irina Higgins.
\newblock Hamiltonian generative networks.
\newblock In \emph{International Conference on Learning Representations}, 2020.
\newblock URL \url{https://openreview.net/forum?id=HJenn6VFvB}.

\bibitem[Zhong et~al.(2021)Zhong, Dey, and Chakraborty]{zhong2021extendinglagrangianhamiltonianneural}
Yaofeng~Desmond Zhong, Biswadip Dey, and Amit Chakraborty.
\newblock Extending lagrangian and hamiltonian neural networks with differentiable contact models, 2021.
\newblock URL \url{https://arxiv.org/abs/2102.06794}.

\bibitem[Cranmer et~al.(2019)Cranmer, Greydanus, Hoyer, Battaglia, Spergel, and Ho]{cranmer2019lagrangian}
Miles Cranmer, Sam Greydanus, Stephan Hoyer, Peter Battaglia, David Spergel, and Shirley Ho.
\newblock Lagrangian neural networks.
\newblock In \emph{ICLR 2020 Workshop on Integration of Deep Neural Models and Differential Equations}, 2019.
\newblock URL \url{https://openreview.net/forum?id=iE8tFa4Nq}.

\bibitem[Lutter et~al.(2019)Lutter, Ritter, and Peters]{lutter2019deeplagrangiannetworksusing}
Michael Lutter, Christian Ritter, and Jan Peters.
\newblock Deep lagrangian networks: Using physics as model prior for deep learning, 2019.
\newblock URL \url{https://arxiv.org/abs/1907.04490}.

\bibitem[Chmiela et~al.(2017)Chmiela, Tkatchenko, Sauceda, Poltavsky, Schütt, and Müller]{GDML}
Stefan Chmiela, Alexandre Tkatchenko, Huziel~E. Sauceda, Igor Poltavsky, Kristof~T. Schütt, and Klaus-Robert Müller.
\newblock Machine learning of accurate energy-conserving molecular force fields.
\newblock \emph{Science Advances}, 3\penalty0 (5):\penalty0 e1603015, 2017.
\newblock \doi{10.1126/sciadv.1603015}.
\newblock URL \url{https://www.science.org/doi/abs/10.1126/sciadv.1603015}.

\bibitem[Gasteiger et~al.(2022)Gasteiger, Groß, and Günnemann]{gasteiger2022directionalmessagepassingmolecular}
Johannes Gasteiger, Janek Groß, and Stephan Günnemann.
\newblock Directional message passing for molecular graphs, 2022.
\newblock URL \url{https://arxiv.org/abs/2003.03123}.

\bibitem[Schütt et~al.(2017)Schütt, Kindermans, Sauceda, Chmiela, Tkatchenko, and Müller]{schütt2017schnetcontinuousfilterconvolutionalneural}
Kristof~T. Schütt, Pieter-Jan Kindermans, Huziel~E. Sauceda, Stefan Chmiela, Alexandre Tkatchenko, and Klaus-Robert Müller.
\newblock Schnet: A continuous-filter convolutional neural network for modeling quantum interactions, 2017.
\newblock URL \url{https://arxiv.org/abs/1706.08566}.

\bibitem[Zhang et~al.(2018)Zhang, Han, Wang, Saidi, Car, and E]{zhang2018endtoendsymmetrypreservinginteratomic}
Linfeng Zhang, Jiequn Han, Han Wang, Wissam~A. Saidi, Roberto Car, and Weinan E.
\newblock End-to-end symmetry preserving inter-atomic potential energy model for finite and extended systems, 2018.
\newblock URL \url{https://arxiv.org/abs/1805.09003}.

\bibitem[Batzner et~al.(2022)Batzner, Musaelian, Sun, Geiger, Mailoa, Kornbluth, Molinari, Smidt, and Kozinsky]{Batzner_2022}
Simon Batzner, Albert Musaelian, Lixin Sun, Mario Geiger, Jonathan~P. Mailoa, Mordechai Kornbluth, Nicola Molinari, Tess~E. Smidt, and Boris Kozinsky.
\newblock E(3)-equivariant graph neural networks for data-efficient and accurate interatomic potentials.
\newblock \emph{Nature Communications}, 13\penalty0 (1), May 2022.
\newblock ISSN 2041-1723.
\newblock \doi{10.1038/s41467-022-29939-5}.
\newblock URL \url{http://dx.doi.org/10.1038/s41467-022-29939-5}.

\bibitem[Pederson et~al.(2022)Pederson, Kalita, and Burke]{Pederson_2022}
Ryan Pederson, Bhupalee Kalita, and Kieron Burke.
\newblock Machine learning and density functional theory.
\newblock \emph{Nature Reviews Physics}, 4\penalty0 (6):\penalty0 357–358, May 2022.
\newblock ISSN 2522-5820.
\newblock \doi{10.1038/s42254-022-00470-2}.
\newblock URL \url{http://dx.doi.org/10.1038/s42254-022-00470-2}.

\bibitem[Li et~al.(2023{\natexlab{c}})Li, Lin, Hu, Zheng, Vignale, Kawaguchi, Neto, Novoselov, and YAN]{li2023dft}
Tianbo Li, Min Lin, Zheyuan Hu, Kunhao Zheng, Giovanni Vignale, Kenji Kawaguchi, A.H.~Castro Neto, Kostya~S. Novoselov, and Shuicheng YAN.
\newblock D4{FT}: A deep learning approach to kohn-sham density functional theory.
\newblock In \emph{The Eleventh International Conference on Learning Representations}, 2023{\natexlab{c}}.
\newblock URL \url{https://openreview.net/forum?id=aBWnqqsuot7}.

\bibitem[Fiedler et~al.(2022)Fiedler, Shah, Bussmann, and Cangi]{PhysRevMaterials.6.040301}
L.~Fiedler, K.~Shah, M.~Bussmann, and A.~Cangi.
\newblock Deep dive into machine learning density functional theory for materials science and chemistry.
\newblock \emph{Phys. Rev. Mater.}, 6:\penalty0 040301, Apr 2022.
\newblock \doi{10.1103/PhysRevMaterials.6.040301}.
\newblock URL \url{https://link.aps.org/doi/10.1103/PhysRevMaterials.6.040301}.

\bibitem[Miyagawa and Yokota(2024{\natexlab{a}})]{miyagawa2024physicsinformed}
Taiki Miyagawa and Takeru Yokota.
\newblock Physics-informed neural networks for functional differential equations: Cylindrical approximation and its convergence guarantees.
\newblock In \emph{The Thirty-eighth Annual Conference on Neural Information Processing Systems}, 2024{\natexlab{a}}.
\newblock URL \url{https://openreview.net/forum?id=H5z0XqEX57}.

\bibitem[Lippe et~al.(2023)Lippe, Veeling, Perdikaris, Turner, and Brandstetter]{lippe2023pderefinerachievingaccuratelong}
Phillip Lippe, Bastiaan~S. Veeling, Paris Perdikaris, Richard~E. Turner, and Johannes Brandstetter.
\newblock Pde-refiner: Achieving accurate long rollouts with neural pde solvers, 2023.
\newblock URL \url{https://arxiv.org/abs/2308.05732}.

\bibitem[Alkin et~al.(2025)Alkin, Fürst, Schmid, Gruber, Holzleitner, and Brandstetter]{alkin2025universalphysicstransformersframework}
Benedikt Alkin, Andreas Fürst, Simon Schmid, Lukas Gruber, Markus Holzleitner, and Johannes Brandstetter.
\newblock Universal physics transformers: A framework for efficiently scaling neural operators, 2025.
\newblock URL \url{https://arxiv.org/abs/2402.12365}.

\bibitem[Zhou et~al.(2024)Zhou, Lorsung, Hemmasian, and Farimani]{zhou2024strategiespretrainingneuraloperators}
Anthony Zhou, Cooper Lorsung, AmirPouya Hemmasian, and Amir~Barati Farimani.
\newblock Strategies for pretraining neural operators, 2024.
\newblock URL \url{https://arxiv.org/abs/2406.08473}.

\bibitem[Hao et~al.(2024)Hao, Su, Liu, Berner, Ying, Su, Anandkumar, Song, and Zhu]{hao2024dpotautoregressivedenoisingoperator}
Zhongkai Hao, Chang Su, Songming Liu, Julius Berner, Chengyang Ying, Hang Su, Anima Anandkumar, Jian Song, and Jun Zhu.
\newblock Dpot: Auto-regressive denoising operator transformer for large-scale pde pre-training, 2024.
\newblock URL \url{https://arxiv.org/abs/2403.03542}.

\bibitem[Herde et~al.(2024)Herde, Raonić, Rohner, Käppeli, Molinaro, de~Bézenac, and Mishra]{herde2024poseidonefficientfoundationmodels}
Maximilian Herde, Bogdan Raonić, Tobias Rohner, Roger Käppeli, Roberto Molinaro, Emmanuel de~Bézenac, and Siddhartha Mishra.
\newblock Poseidon: Efficient foundation models for pdes, 2024.
\newblock URL \url{https://arxiv.org/abs/2405.19101}.

\bibitem[Zhou et~al.(2025)Zhou, Li, Schneier, Jr, and Farimani]{zhou2025text2pdelatentdiffusionmodels}
Anthony Zhou, Zijie Li, Michael Schneier, John R~Buchanan Jr, and Amir~Barati Farimani.
\newblock Text2pde: Latent diffusion models for accessible physics simulation, 2025.
\newblock URL \url{https://arxiv.org/abs/2410.01153}.

\bibitem[Li et~al.(2022)Li, Liu-Schiaffini, Kovachki, Liu, Azizzadenesheli, Bhattacharya, Stuart, and Anandkumar]{li2022learningdissipativedynamicschaotic}
Zongyi Li, Miguel Liu-Schiaffini, Nikola Kovachki, Burigede Liu, Kamyar Azizzadenesheli, Kaushik Bhattacharya, Andrew Stuart, and Anima Anandkumar.
\newblock Learning dissipative dynamics in chaotic systems, 2022.
\newblock URL \url{https://arxiv.org/abs/2106.06898}.

\bibitem[List et~al.(2025)List, Chen, Bali, and Thuerey]{LIST2025117441}
Bjoern List, Li-Wei Chen, Kartik Bali, and Nils Thuerey.
\newblock Differentiability in unrolled training of neural physics simulators on transient dynamics.
\newblock \emph{Computer Methods in Applied Mechanics and Engineering}, 433:\penalty0 117441, 2025.
\newblock ISSN 0045-7825.
\newblock \doi{https://doi.org/10.1016/j.cma.2024.117441}.
\newblock URL \url{https://www.sciencedirect.com/science/article/pii/S0045782524006960}.

\bibitem[Li et~al.(2025)Li, Zhou, and Farimani]{li2025generativelatentneuralpde}
Zijie Li, Anthony Zhou, and Amir~Barati Farimani.
\newblock Generative latent neural pde solver using flow matching, 2025.
\newblock URL \url{https://arxiv.org/abs/2503.22600}.

\bibitem[Raissi et~al.(2019)Raissi, Perdikaris, and Karniadakis]{RAISSI2019686}
M.~Raissi, P.~Perdikaris, and G.E. Karniadakis.
\newblock Physics-informed neural networks: A deep learning framework for solving forward and inverse problems involving nonlinear partial differential equations.
\newblock \emph{Journal of Computational Physics}, 378:\penalty0 686--707, 2019.
\newblock ISSN 0021-9991.
\newblock \doi{https://doi.org/10.1016/j.jcp.2018.10.045}.
\newblock URL \url{https://www.sciencedirect.com/science/article/pii/S0021999118307125}.

\bibitem[Wang et~al.(2021{\natexlab{a}})Wang, Wang, and Perdikaris]{wang2021learningsolutionoperatorparametric}
Sifan Wang, Hanwen Wang, and Paris Perdikaris.
\newblock Learning the solution operator of parametric partial differential equations with physics-informed deeponets, 2021{\natexlab{a}}.
\newblock URL \url{https://arxiv.org/abs/2103.10974}.

\bibitem[Shumaylov et~al.(2025)Shumaylov, Zaika, Rowbottom, Sherry, Weber, and Schönlieb]{shumaylov2025liealgebracanonicalizationequivariant}
Zakhar Shumaylov, Peter Zaika, James Rowbottom, Ferdia Sherry, Melanie Weber, and Carola-Bibiane Schönlieb.
\newblock Lie algebra canonicalization: Equivariant neural operators under arbitrary lie groups, 2025.
\newblock URL \url{https://arxiv.org/abs/2410.02698}.

\bibitem[Wang et~al.(2021{\natexlab{b}})Wang, Walters, and Yu]{wang2021incorporating}
Rui Wang, Robin Walters, and Rose Yu.
\newblock Incorporating symmetry into deep dynamics models for improved generalization.
\newblock In \emph{International Conference on Learning Representations}, 2021{\natexlab{b}}.
\newblock URL \url{https://openreview.net/forum?id=wta_8Hx2KD}.

\bibitem[Brandstetter et~al.(2023)Brandstetter, van~den Berg, Welling, and Gupta]{brandstetter2023cliffordneurallayerspde}
Johannes Brandstetter, Rianne van~den Berg, Max Welling, and Jayesh~K. Gupta.
\newblock Clifford neural layers for pde modeling, 2023.
\newblock URL \url{https://arxiv.org/abs/2209.04934}.

\bibitem[Akhound-Sadegh et~al.(2023)Akhound-Sadegh, Perreault-Levasseur, Brandstetter, Welling, and Ravanbakhsh]{akhoundsadegh2023liepointsymmetryphysics}
Tara Akhound-Sadegh, Laurence Perreault-Levasseur, Johannes Brandstetter, Max Welling, and Siamak Ravanbakhsh.
\newblock Lie point symmetry and physics informed networks, 2023.
\newblock URL \url{https://arxiv.org/abs/2311.04293}.

\bibitem[Horie and Mitsume(2023)]{horie2023physicsembeddedneuralnetworksgraph}
Masanobu Horie and Naoto Mitsume.
\newblock Physics-embedded neural networks: Graph neural pde solvers with mixed boundary conditions, 2023.
\newblock URL \url{https://arxiv.org/abs/2205.11912}.

\bibitem[Sosanya and Greydanus(2022)]{sosanya2022dissipativehamiltonianneuralnetworks}
Andrew Sosanya and Sam Greydanus.
\newblock Dissipative hamiltonian neural networks: Learning dissipative and conservative dynamics separately, 2022.
\newblock URL \url{https://arxiv.org/abs/2201.10085}.

\bibitem[Bhatia et~al.(2013)Bhatia, Norgard, Pascucci, and Bremer]{hh_decomp}
Harsh Bhatia, Gregory Norgard, Valerio Pascucci, and Peer-Timo Bremer.
\newblock The helmholtz-hodge decomposition—a survey.
\newblock \emph{IEEE Transactions on Visualization and Computer Graphics}, 19\penalty0 (8):\penalty0 1386--1404, 2013.
\newblock \doi{10.1109/TVCG.2012.316}.

\bibitem[Sanders et~al.(2024)Sanders, DeVoria, Washuta, Elamin, Skenes, and Berlinghieri]{Sanders_DeVoria_Washuta_Elamin_Skenes_Berlinghieri_2024}
John~W. Sanders, A.C. DeVoria, Nathan~J. Washuta, Gafar~A. Elamin, Kevin~L. Skenes, and Joel~C. Berlinghieri.
\newblock A canonical hamiltonian formulation of the navier–stokes problem.
\newblock \emph{Journal of Fluid Mechanics}, 984:\penalty0 A27, 2024.
\newblock \doi{10.1017/jfm.2024.229}.

\bibitem[Figotin and Schenker(2007)]{Figotin_2007_dissipative}
Alexander Figotin and Jeffrey~H. Schenker.
\newblock Hamiltonian structure for dispersive and dissipative dynamical systems.
\newblock \emph{Journal of Statistical Physics}, 128\penalty0 (4):\penalty0 969–1056, June 2007.
\newblock ISSN 1572-9613.
\newblock \doi{10.1007/s10955-007-9321-1}.
\newblock URL \url{http://dx.doi.org/10.1007/s10955-007-9321-1}.

\bibitem[Bhattacharya et~al.(2021)Bhattacharya, Hosseini, Kovachki, and Stuart]{bhattacharya2021modelreductionneuralnetworks}
Kaushik Bhattacharya, Bamdad Hosseini, Nikola~B. Kovachki, and Andrew~M. Stuart.
\newblock Model reduction and neural networks for parametric pdes, 2021.
\newblock URL \url{https://arxiv.org/abs/2005.03180}.

\bibitem[Lanthaler et~al.(2022)Lanthaler, Mishra, and Karniadakis]{lanthaler2022errorestimatesdeeponetsdeep}
Samuel Lanthaler, Siddhartha Mishra, and George~Em Karniadakis.
\newblock Error estimates for deeponets: A deep learning framework in infinite dimensions, 2022.
\newblock URL \url{https://arxiv.org/abs/2102.09618}.

\bibitem[Kovachki et~al.(2021)Kovachki, Lanthaler, and Mishra]{JMLR:v22:21-0806}
Nikola Kovachki, Samuel Lanthaler, and Siddhartha Mishra.
\newblock On universal approximation and error bounds for fourier neural operators.
\newblock \emph{Journal of Machine Learning Research}, 22\penalty0 (290):\penalty0 1--76, 2021.
\newblock URL \url{http://jmlr.org/papers/v22/21-0806.html}.

\bibitem[Miyagawa and Yokota(2024{\natexlab{b}})]{yokota_functional}
Taiki Miyagawa and Takeru Yokota.
\newblock Physics-informed neural networks for functional differential equations: Cylindrical approximation and its convergence guarantees.
\newblock In A.~Globerson, L.~Mackey, D.~Belgrave, A.~Fan, U.~Paquet, J.~Tomczak, and C.~Zhang, editors, \emph{Advances in Neural Information Processing Systems}, volume~37, pages 72274--72409. Curran Associates, Inc., 2024{\natexlab{b}}.
\newblock URL \url{https://proceedings.neurips.cc/paper_files/paper/2024/file/84c6c76526783f4afd82687829d3aa97-Paper-Conference.pdf}.

\bibitem[Venturi and Dektor(2021)]{Venturi2021_functional}
Daniele Venturi and Alec Dektor.
\newblock Spectral methods for nonlinear functionals and functional differential equations.
\newblock \emph{Research in the Mathematical Sciences}, 8\penalty0 (2):\penalty0 27, Apr 2021.
\newblock ISSN 2197-9847.
\newblock \doi{10.1007/s40687-021-00265-4}.
\newblock URL \url{https://doi.org/10.1007/s40687-021-00265-4}.

\bibitem[Friedrichs and Shapiro(1957)]{pnas}
K.~O. Friedrichs and H.~N. Shapiro.
\newblock Integration over hilbert space and outer extensions.
\newblock \emph{Proceedings of the National Academy of Sciences}, 43\penalty0 (4):\penalty0 336--338, 1957.
\newblock \doi{10.1073/pnas.43.4.336}.
\newblock URL \url{https://www.pnas.org/doi/abs/10.1073/pnas.43.4.336}.

\bibitem[LeVeque and George()]{randall_bathymetry}
Randall~J. LeVeque and David~L. George.
\newblock \emph{High-resolution Finite Volume Methods for the Shallow Water Equations with Bathymetry and Dry States}, pages 43--73.
\newblock \doi{10.1142/9789812790910_0002}.
\newblock URL \url{https://www.worldscientific.com/doi/abs/10.1142/9789812790910_0002}.

\bibitem[{Hoai Linh} and Cuong(2023)]{HOAILIN_bathymetry}
Nguyen~Ba {Hoai Linh} and Dao~Huy Cuong.
\newblock A well-balanced finite volume scheme based on planar riemann solutions for 2d shallow water equations with bathymetry.
\newblock \emph{Applied Mathematics and Computation}, 457:\penalty0 128167, 2023.
\newblock ISSN 0096-3003.
\newblock \doi{https://doi.org/10.1016/j.amc.2023.128167}.
\newblock URL \url{https://www.sciencedirect.com/science/article/pii/S0096300323003363}.

\bibitem[Holoborodko(2008)]{snrd}
Pavel Holoborodko.
\newblock Smooth noise robust differentiators, 2008.

\bibitem[Harten et~al.(1997)Harten, Engquist, Osher, and Chakravarthy]{HARTEN19973}
Ami Harten, Bjorn Engquist, Stanley Osher, and Sukumar~R. Chakravarthy.
\newblock Uniformly high order accurate essentially non-oscillatory schemes, iii.
\newblock \emph{Journal of Computational Physics}, 131\penalty0 (1):\penalty0 3--47, 1997.
\newblock ISSN 0021-9991.
\newblock \doi{https://doi.org/10.1006/jcph.1996.5632}.
\newblock URL \url{https://www.sciencedirect.com/science/article/pii/S0021999196956326}.

\bibitem[Liu et~al.(1994)Liu, Osher, and Chan]{LIU1994200}
Xu-Dong Liu, Stanley Osher, and Tony Chan.
\newblock Weighted essentially non-oscillatory schemes.
\newblock \emph{Journal of Computational Physics}, 115\penalty0 (1):\penalty0 200--212, 1994.
\newblock ISSN 0021-9991.
\newblock \doi{https://doi.org/10.1006/jcph.1994.1187}.
\newblock URL \url{https://www.sciencedirect.com/science/article/pii/S0021999184711879}.

\bibitem[Rahul and Bhattacharyya(2006)]{RAHUL200613}
Kumar Rahul and S.N. Bhattacharyya.
\newblock One-sided finite-difference approximations suitable for use with richardson extrapolation.
\newblock \emph{Journal of Computational Physics}, 219\penalty0 (1):\penalty0 13--20, 2006.
\newblock ISSN 0021-9991.
\newblock \doi{https://doi.org/10.1016/j.jcp.2006.05.035}.
\newblock URL \url{https://www.sciencedirect.com/science/article/pii/S002199910600266X}.

\end{thebibliography}


\newpage
\appendix

\section{Additional Visualizations}
\begin{figure}[h]
    \centering
    \includegraphics[width=\linewidth]{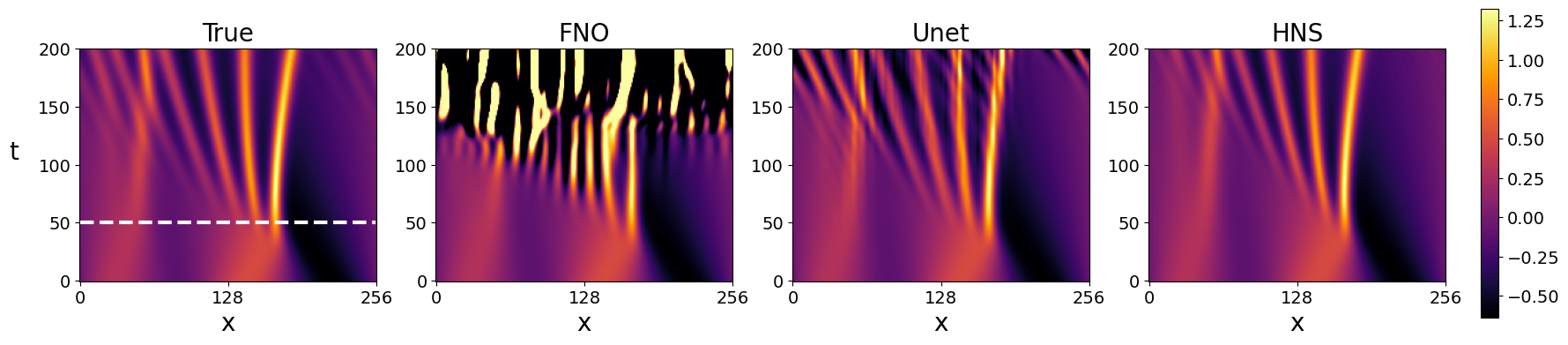}
    \caption{Predicted trajectories of the KdV equation. The white dashed line denotes the training horizon, where only states from \(t=0\) to \(t=50\) are seen during training. Despite not seeing later, more chaotic states, HNS still extrapolates beyond the training horizon by conserving the Hamiltonian, while other models struggle. FNO and Unet models
    are shown with their \(d\mathbf{u}/dt\) variants.}
    \label{fig:kdv}
\end{figure}
\begin{figure}[h]
    \centering
    \includegraphics[width=0.83\linewidth]{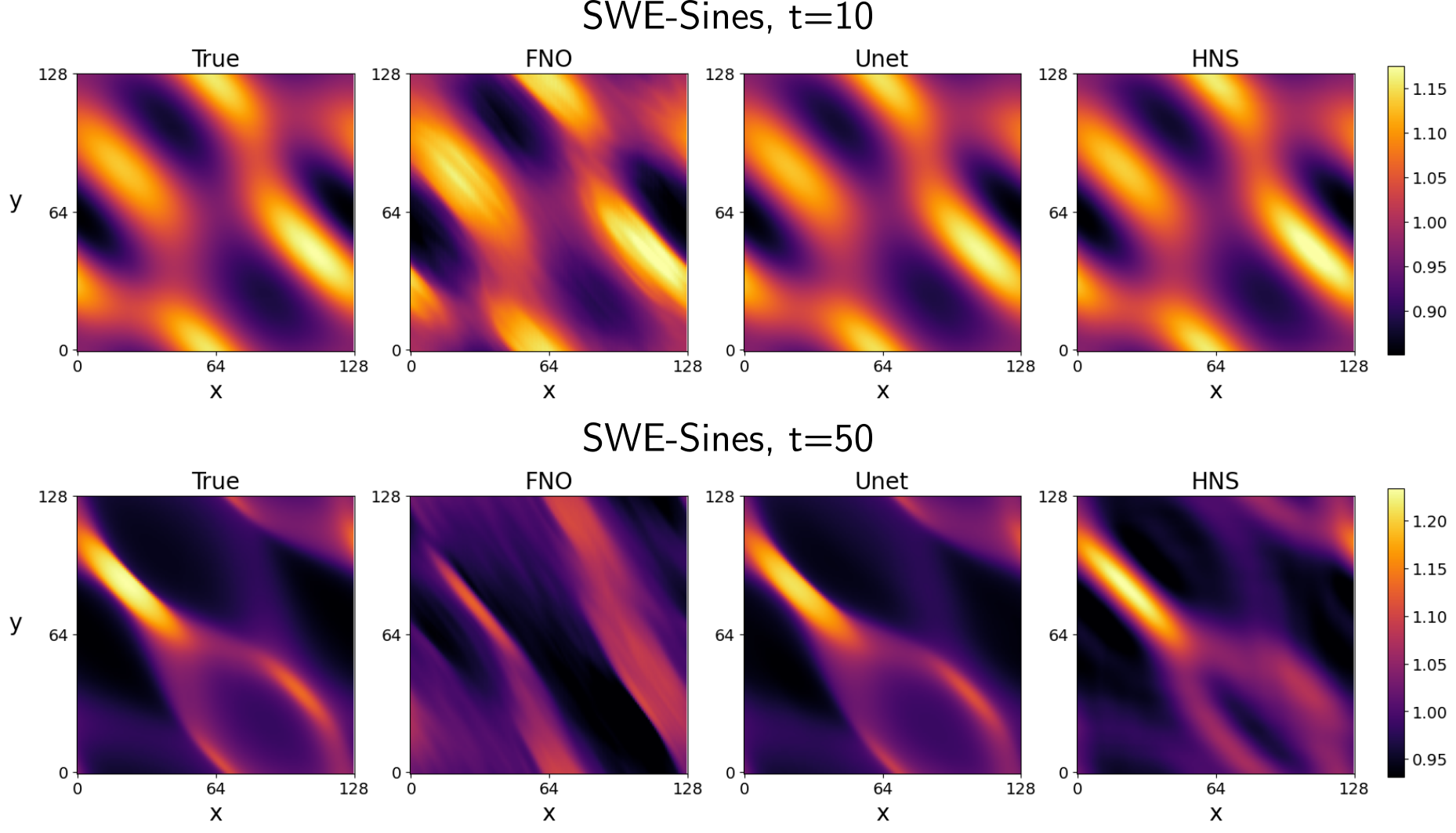}
    \caption{Predicted trajectories of the Shallow Water Equations (SWE) with sinusoidal ICs, displayed at \(t=10\) and \(t=50\). All models are able to achieve good performance on in-distribution samples. }
    \label{fig:sines}
\end{figure}
\begin{figure}[h!]
    \centering
    \includegraphics[width=0.83\linewidth]{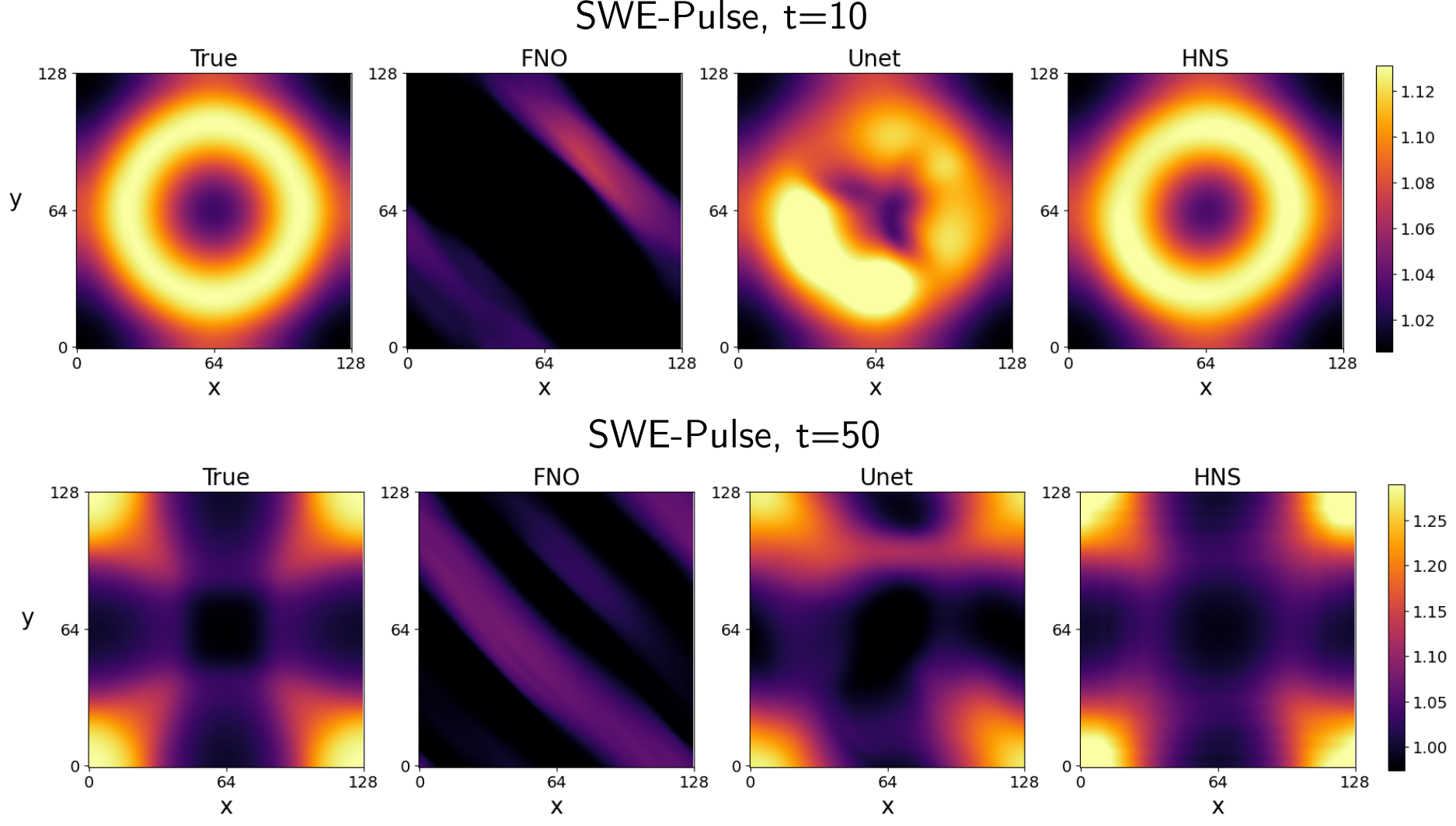}
    \caption{Predicted SWE trajectories with Gaussian pulse ICs, displayed at \(t=10\) and \(t=50\). Despite showing different behavior, HNS can better generalize to OOD samples.}
    \label{fig:pulse}
\end{figure}
\newpage
\section{Additional Results}
\subsection{Hamiltonian Errors and Inductive Bias}
\label{app:ham}
\begin{table}[h]
    \centering
    \begin{tabular}{l c c c c}
        \toprule
         Model & Adv & KdV & SWE-Sines & SWE-Pulse \\
         \midrule
         FNO & 2.18 & NaN & 0.0091 & 0.1326 \\ 
         Unet & 0.22 & 2.37 & 0.0053 & 0.0158\\ 
         FNO(\(\frac{d\mathbf{u}}{dt}\)) & 0.033 & 6.75e8 & - & -\\ 
         Unet(\(\frac{d\mathbf{u}}{dt}\)) & 0.043 & 5.76e6 & - & -\\ 
         HNS & \cellcolor{mygray}0.0002 & \cellcolor{mygray}1.32 & \cellcolor{mygray}0.0015 & \cellcolor{mygray}0.0003\\
         \bottomrule
    \end{tabular}
   \vspace{5pt}
    \caption{Relative L2 Error for each experiment, evaluated on the Hamiltonian of predicted trajectories. Despite not including the Hamiltonian in the training loss and testing on OOD samples, HNS is exceptional at predicting solutions that conserve the Hamiltonian.}
    \label{tab:ham_errors}
\end{table}

In the main body, we presented qualitative evidence that HNS is able to better conserve the Hamiltonian of predicted trajectories. To quantify this claim, we calculate the Relative L2 Error of the Hamiltonian of predicted trajectories across all PDE experiments. Specifically, the error of a sample is given by: \(\frac{1}{T}\sum_{t=1}^T \frac{||\mathcal{H}[\mathbf{u}^t] - \mathcal{H}[\mathbf{u}_\theta^t]||_2^2}{||\mathcal{H}[\mathbf{u}^t] ||_2^2}\), where \(\mathbf{u}^t\) is the true solution at time \(t\), \(\mathbf{u}^t_\theta\) is the predicted solution at time \(t\), and \(\mathcal{H}[\cdot]\) is the Hamiltonian given by the integrals in Equations \ref{eq:ham_adv} and \ref{eq:ham_swe}. The Hamiltonian errors are averaged across the validation sets and reported in Table \ref{tab:ham_errors}. We note that in the KdV experiments, the dynamics are chaotic and the Hamiltonian is highly nonlinear, both of which contribute to large errors once predicted trajectories diverge. Overall, we find that HNS has exceptional capabilities in implicitly conserving energy-like quantities in their prediction. 

Interestingly, this capability exists even when not trained with a loss on the Hamiltonian (i.e., \(||\mathcal{H}[\mathbf{u}^t] - \mathcal{H}[\mathbf{u}_\theta^t]||_2^2\)) and when predicting out-of-distribution trajectories. Where does this inductive bias to conserve the Hamiltonian come from? Based on \citet{gruver2022deconstructinginductivebiaseshamiltonian}, we hypothesize four inductive biases in HNS that may contribute to this:
\begin{enumerate}
    \item ODE Bias: HNS predicts \(\frac{d\mathbf{u}}{dt}\) and uses an ODE integrator to evolve PDE dynamics. 
    \item Hamiltonian Bias: HNS relies on \(\mathcal{J}\frac{\delta \mathcal{H}}{\delta \mathbf{u}}\) to calculate \(\frac{d\mathbf{u}}{dt}\).
    \item Gradient Learning Bias: HNS relies on \(\text{autograd}(\mathcal{H}_\theta[\mathbf{u}], \mathbf{u})\) to calculate \(\frac{\delta \mathcal{H}}{\delta \mathbf{u}}\).
    \item Neural Functional Bias: HNS relies on integral kernel functionals to calculate \(\text{autograd}(\mathcal{H}_\theta[\mathbf{u}], \mathbf{u})\). 
\end{enumerate}

We note that these biases are listed from simplest to most complex and are cumulative; for example, it is not possible to implement Hamiltonian structure without also implementing an ODE integrator. We modify a Unet to incorporate each inductive bias and evaluate its performance on the KdV equation, which is the most chaotic setting. The ODE bias was considered in the main work by training \(d\mathbf{u}/dt\) model variants, while incorporating Hamiltonian bias is done by changing the training label to \(\delta\mathcal{H}/\delta \mathbf{u}\) and using \(\mathcal{J}\) during inference. To extend Unets to gradient learning, we add a linear head to project the output field to a scalar and use automatic differentiation to obtain \(\delta\mathcal{H}/\delta \mathbf{u}\). After training, we evaluate model correlation time and Hamiltonian error on the validation set and the results are reported in Table \ref{tab:inductive}. 

We find that the main contributors to HNS performance and its ability to conserve energies over time is using Hamiltonian structure as well as the neural functional architecture. Using an ODE integrator alone degrades model performance and implementing gradient learning with Unets also harms performance. Interestingly, using gradient learning with conventional architectures disproportionately harms energy conservation; this implies that in PDEs, gradient domain learning improves energy conservation only when architectures have meaningful functional derivatives. This is not readily apparent, as observations from molecular dynamics or HNNs suggest that gradient domain learning generally improves energy conservation without considering the underlying architectures \cite{greydanus2019hamiltonianneuralnetworks, cranmer2019lagrangian, GDML}. Based on these findings, we hypothesize the following mechanism underlying HNS: integral kernel functionals enable the learning of functional derivatives, combining this with infinite-dimensional Hamiltonian mechanics leads to its performance and conservation properties.
\begin{table}[h!]
    \centering
    \begin{tabular}{l c c }
        \toprule
         Metric: & Correlation Time (\(\uparrow\)) & Hamiltonian Error (\(\downarrow\))\\
         \midrule
         Unet (Base) & 125.25 & 2.37 \\ 
         Unet (ODE) & 120.5 & 5.76e6 \\ 
         Unet (Ham.) & 143.75 & 2.06\\ 
         Unet (Grad.) & 141 & 4.71 \\ 
         HNS & \cellcolor{mygray}151.75 & \cellcolor{mygray}1.32 \\
         \bottomrule
    \end{tabular}
   \vspace{5pt}
    \caption{Correlation time and Hamiltonian errors for Unet models with increasingly more inductive biases, compared to HNS on the KdV equation. Using ODE integrators or gradient domain learning both degrade performance, while using Hamiltonian structure or neural functionals both increase performance and energy conservation.}
    \label{tab:inductive}
\end{table}
\subsection{Timing Experiments}
\label{app:timing}
\begin{table}[h]
    \centering
    \begin{tabular}{l c c c}
        \toprule
         Model (\#Params) & Adv & KdV & SWE-Sines \\
         \midrule
         FNO (65K/135K/7M) & 0.083 & \cellcolor{mygray}0.091 & \cellcolor{mygray}0.967\\ 
         Unet (65K/146K/3M) & 0.138 & 0.146 & 1.345\\ 
        HNS (32K/87K/3M) &  0.126 & 0.228 & 4.547 \\
        Numerical Solver & \cellcolor{mygray}0.000 & 110.9 & 33.26 \\
         \bottomrule
    \end{tabular}
   \vspace{5pt}
    \caption{Computational cost per inference step for each model across different PDEs, given in milliseconds (ms). Model sizes are also reported for the (Adv/Kdv/SWE) experiments. Each run uses a single NVIDIA RTX 6000 Ada GPU. Numerical solvers are run on a AMD Ryzen Threadripper PRO 5975WX 32-Core CPU due to lack of GPU compatibility.}
    \label{tab:times}
\end{table}
The additional overhead of using automatic differentiation,  evaluating the operator \(\mathcal{J}\), and applying an ODE integrator during inference increases the computational cost of HNS. We examine this in Table \ref{tab:times} by performing a model rollout and averaging the time per inference step across a batch. Additionally, we report the runtimes of the numerical solvers used. In the case of 1D Advection, the analytical solution is used. Randomly sampled initial conditions can cause variability in the stiffness of the solution, therefore, solver runtimes are averaged across 10 samples. We note that the 2D SWE solver (PyClaw) \cite{pyclaw-sisc} uses a Fortran compiler and is optimized for hyperbolic PDEs, while the KdV solver is purely Python-based. We note that for a fairer comparison, the numerical solvers should be coarsened to match the accuracy of the neural solvers. 

Consistent with prior works, FNO models remain a fast neural PDE solver. While HNS may have a lower parameter count, this may be a misleading measure of inference speed. Automatic differentiation calculates the gradient with respect to each input point, therefore the computational cost of this step can scale with the size of the input grid. This can be seen when comparing the Adv and KdV experiments, where the grid size doubles from \(n_x=128\) to \(n_x=256\), and the computational cost roughly doubles as well. In 2D experiments, the cost of HNS is appreciably more than other baselines, although it is still modestly faster than an optimized numerical solver. We do not anticipate that the finite-difference (FD) schemes used in \(\mathcal{J}\) or ODE integrators contribute meaningfully to computational cost, since FD schemes use \(\mathcal{O}(M)\) operations, where \(M\) is the number of grid points, and numerical integration occurs in constant time.

\subsection{Ablation Studies}
\label{app:ablations}

\begin{table}[htb]
    \begin{subtable}{.3\linewidth}
      \centering
        \begin{tabular}{lr}
        \toprule

          Model  & Rollout Error \\
            \midrule
            Base & 1.248\\
            + Nonlinear & 0.028 \\
            + FiLM & 0.016 \\
            + SIREN & 0.003 \\
            + Global & 0.057 \\
        \bottomrule
        \end{tabular}
        \caption{Ablations for 1D Advection}
    \end{subtable}%
\hspace{\fill}
    \begin{subtable}{.33 \linewidth}
      \centering
        \begin{tabular}{lr}
            \toprule
            Model  & Correlation Time \\
            \midrule
            Base & 51.75\\
            + Nonlinear & 73.00 \\
            + FiLM &  65.25 \\
            + SIREN &  73.25\\
            + Global &  151.75\\
            \bottomrule
        \end{tabular}
        \caption{Ablations for 1D KdV}
    \end{subtable} 
\hspace{\fill}
    \begin{subtable}{.3\linewidth}
      \centering
        \begin{tabular}{lr}
        \toprule

          Model  & Rollout Error \\
            \midrule
            Base & 0.113\\
            + Nonlinear &  0.034\\
            + FiLM & 0.030 \\
            + SIREN & 0.026\\
            + Global & 0.024 \\
        \bottomrule
        \end{tabular}
        \caption{Ablations for 2D SWE-Sines}
    \end{subtable} 
    \caption{Ablation studies for the HNS architecture. The base architecture is specified by (Linear, Concat, SIREN, Local). Each successive row is cumulative, adding an additional feature to the model. Parameter sizes are approximately constant across ablations.}
    \label{tab:ablations}
\end{table}

There are many choices for parameterizing a kernel integral, such as from different kernels (linear, nonlinear), different architectures (MLP, SIREN), different conditioning mechanisms (concat/FiLM) and different receptive fields (local, global). Since this design space is large, we consider the effects of incrementally improving each aspect and report the performance of different HNS models in Table \ref{tab:ablations}. We find that a nonlinear kernel is needed in all cases, since all the Hamiltonians tested are nonlinear. In addition, the use of FiLM conditioning and SIRENs tend to improve performance. Lastly, global conditioning is necessary for the KdV equation, which has nonlocal terms (\(\frac{\partial^2u}{\partial x^2}\)) in the Hamiltonian.

\subsection{Varying Dataset Sizes}
We study the effect of varying the number of samples in the training dataset on HNS performance in Table \ref{tab:dataset}. The training dataset is truncated by a factor of 8, 4, or 2, and evaluated on the same validation dataset. This is repeated across all experiments/PDEs. In general, increasing the dataset size consistently improves model performance. This is more acute for 1D equations. For 2D SWE, the extra spatial dimension may allow for more data per sample relative to the complexity of the dynamics. 

\begin{table}[h]
    \centering
    \begin{tabular}{l c c c c}
        \toprule
         Dataset Size & Adv & KdV & SWE-Sines & SWE-Pulse \\
         \midrule
         \(f\)=1/8 & 1.320 & 71.00 & 0.0293 & 0.0243 \\ 
         \(f\)=1/4 & 0.302 & 71.25 & 0.0274 & 0.0230 \\ 
        \(f\)=1/2 &  0.075 & 95.25 & 0.0254 & 0.0227\\
        \(f\)=1 & 0.003 & 151.75 & 0.0249 & 0.0216 \\
         \bottomrule
    \end{tabular}
   \vspace{5pt}
    \caption{Rollout error (\(\downarrow\)) or correlation times (\(\uparrow\)) of HNS models trained on datasets downsampled by a factor of \(f\) (e.g., \(f=1/2\) uses half the number of training samples in the training set). }
    \label{tab:dataset}
\end{table}

\subsection{Discretization Invariance} 

We experiment with querying models at different discretizations than its training resolution. The results are in Table \ref{tab:disc_invar}, with the training resolution in bold. Rollout errors are averaged over the validation set, where validation labels are either downsampled or re-solved at a higher resolution using the same initial conditions. Additionally, for 1D Advection, we conducted an experiment in which the model is queried on an unseen grid \(x=[0, 32], n_x=256\), after being trained on a grid \(x=[0, 16], n_x=128\), denoted as \(256^*\).

\begin{table*}[h!]
\addtolength{\tabcolsep}{-0.4em}
\centering
\begin{subtable}[t]{0.33\textwidth}
\centering
\caption{Rollout Error at Different Resolutions, 1D Adv}
\begin{tabular}{p{15mm}ccc}
\toprule
Resolution & HNF & FNO & Unet \\
\midrule
64         & 0.0029 & 0.064  & 1.19 \\
\textbf{128} & 0.0033 & 0.064  & 0.075 \\
256         & 0.0055 & 0.050  & 1.32 \\
256\(^*\) & 0.0021 & 1.34  & 0.072 \\
\bottomrule
\end{tabular}
\end{subtable}%
\hfill
\begin{subtable}[t]{0.33\textwidth}
\centering
\caption{Rollout Error at Different Resolutions, 2D SWE (Sines)}
\begin{tabular}{lccc}
\toprule
Resolution & HNF & FNO & Unet \\
\midrule
(64, 64)   & 0.028 & 0.066 & 0.098 \\
\textbf{(128, 128)} & 0.029 & 0.066 & 0.008 \\
(256, 256) & 0.026 & 0.066 & 0.110 \\
\bottomrule
\end{tabular}
\end{subtable}%
\hfill
\begin{subtable}[t]{0.33\textwidth}
\centering
\caption{Rollout Error at Different Resolutions, 2D SWE (Pulse)}
\begin{tabular}{lccc}
\toprule
Resolution & HNF & FNO & Unet \\
\midrule
(64, 64)   & 0.035 & 0.173 & 0.173 \\
\textbf{(128, 128)} & 0.024 & 0.174 & 0.117 \\
(256, 256) & 0.019 & 0.174 & 0.225 \\
\bottomrule
\end{tabular}
\end{subtable}
\caption{Rollout error of models at different resolutions for 1D Adv and 2D SWE datasets.}
\label{tab:disc_invar}
\end{table*}

In all cases, HNFs are capable of zero-shot super-resolution, as is possible with Neural Operators. Additionally, the error is approximately constant across discretizations and on a new, unseen grid for 1D Advection. Predictably, FNO models have nearly constant error across discretizations. Interestingly, for an extrapolated grid in 1D Advection, FNO struggles to make predictions. For Unet models, the error increases when the grid spacing changes, however for a constant spacing, it is able to extrapolate to unseen grids (i.e. \([0, 16] \rightarrow [0, 32]\) with \(dx\) unchanged). 

\section{Additional Theory}
\subsection{Proof of Linear Functional Approximation}
\label{app:proof}
Interestingly, while development of neural functionals was arrived upon independently, a proof of linear functional approximation exists. In fact, approximating a linear functional with an integral kernel is an intermediate step in operator approximation theorems \cite{bhattacharya2021modelreductionneuralnetworks, lanthaler2022errorestimatesdeeponetsdeep, JMLR:v22:21-0806} and its proof comes as a series of lemmas in \citet{kovachki_neuraloperator}. For the sake of completeness, we provide relevant definitions and cite the necessary lemmas here.

\paragraph{Definitions (Appendix A, \citet{kovachki_neuraloperator})} Let \(D \subset \mathbb{R}^d\) be a domain. Let \(\mathbb{N}_0=\mathbb{N}\cup\{0\}\) denote the natural numbers that include zero. Let \(\mathcal{X}\) be a Banach space, and \(\mathcal{X}^*\) be its continuous dual space. Specifically, the dual space contains all continuous, linear functionals \(f:\mathcal{X}\rightarrow \mathbb{R}\) with the norm: \(||f||_{\mathcal{X}^*} = \sup_{x\in\mathcal{X}, ||x||_\mathcal{X}=1} |f(x)|<\infty\).

For any multi-index \(\alpha \in \mathbb{N}_0^d\), \(\partial^\alpha\) is the \(\alpha\)-th partial derivative of \(f\). The following spaces are defined for \(m \in \mathbb{N}_0\): 
\begin{itemize}
    \item \(C(D) = \{f:D \rightarrow \mathbb{R} : f \text{ is continuous}\}\)
    \item \(C^m(D) = \{f:D \rightarrow \mathbb{R} : \partial^\alpha f \in C^{m-|\alpha|_1}(D) \: \forall\: 0\leq |\alpha|_1\leq m\}\)
    \item \(C_c^\infty(D) = \{f \in C^\infty (D) : \text{supp}(f) \subset D \text{ is compact}\}\)
    \item \(C^m_b(D) = \{f \in C^m(D) : \max_{0\leq|\alpha|_1\leq m}\sup_{x\in D}|\partial^\alpha f(x)| < \infty\}\)
    \item \(C^m(\bar{D}) =  \{f \in C^m_b(D): \partial^\alpha f \text{ is uniformly continuous }\forall 0\leq |\alpha|_1\leq m\}\)
\end{itemize}

Additionally, we cite definitions of neural networks from Section 8.1, \citet{kovachki_neuraloperator}. For any \(n\in \mathbb{N}\) and \(\sigma: \mathbb{R}\rightarrow\mathbb{R}\), a neural network with \(n\) layers is given by: 
\begin{align*}
    N_n(\sigma; \mathbb{R}^d, \mathbb{R}^{d'}) := \{f:\mathbb{R}^d \rightarrow\mathbb{R}^{d'} : &f(x) = W_n\sigma(\ldots W_1\sigma(W_0x+b_0)+b_1\ldots)+b_n,
    \\&W_0\in \mathbb{R}^{d_0\times d}, W_1\in \mathbb{R}^{d_1\times d_0}, \ldots, W_n \in \mathbb{R}^{d'\times d_{n-1}},
    \\&b_0 \in \mathbb{R}^{d_0}, b_1 \in \mathbb{R}^{d_1}, \ldots,b_n\in\mathbb{R}^{d'}, d_0, d_1,\ldots,d' \in \mathbb{N}\}
\end{align*}
The activation functions \(\sigma\) are restricted to the set: 
\begin{equation*}
    \sigma \in A_m := \{\sigma\in C(\mathbb{R}) : \exists n\in \mathbb{N} \:\text{s.t. } N_n \text{ is dense in } C^m(K) \forall K\subset \mathbb{R}^d \text{ compact}\}
\end{equation*}
to allow universal approximation. Lastly, functions are restricted to be real-valued, although extensions to vector-valued functions are possible (Section 8.3, \citet{kovachki_neuraloperator}). 

\begin{theorem}[Lemma 30, \citet{kovachki_neuraloperator}]
Let \(D \subset \mathbb{R}^d\) be a domain and \(L\in (C(\bar{D}))^*\) be a linear functional. For any compact set \(K\subset C(\bar{D})\) and \(\epsilon > 0\), there exists a function \(\kappa \in C^\infty_c (D)\) such that:
\begin{equation*}
    \sup_{u\in K} |L(u) - \int_D\kappa u \text{d}x | < \epsilon
\end{equation*}
\end{theorem}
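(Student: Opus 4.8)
The plan is to represent $L$ by a measure, push that measure slightly off the boundary into the interior of $D$, mollify it, and keep every error uniformly small over $K$ by exploiting the compactness of $K$. (I assume throughout, as is implicit in the setup, that $D$ is bounded, so $\bar D$ is compact and $C(\bar D)$ is the space of continuous functions on a compact set.)

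First I would apply the Riesz--Markov--Kakutani representation theorem to the compact space $\bar D$: there is a finite signed regular Borel measure $\mu$ with $L(u) = \int_{\bar D} u \, dx\,d\mu$ — more precisely $L(u) = \int_{\bar D} u \, d\mu$ for every $u \in C(\bar D)$, with $\|\mu\|_{\mathrm{TV}} = \|L\|$. Second, since $K \subset C(\bar D)$ is compact, the Arzelà--Ascoli theorem supplies a uniform bound $\sup_{u\in K}\|u\|_\infty < \infty$ together with a single modulus of continuity $\omega$, $\omega(t)\to 0$ as $t\to 0^+$, such that $|u(x)-u(y)|\le \omega(|x-y|)$ for all $u\in K$; this is precisely the device that upgrades the pointwise estimates below to estimates uniform over $K$.

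The one genuinely non-routine point is that $\kappa$ must lie in $C_c^\infty(D)$, i.e.\ be smooth \emph{and} supported in a compact subset of the open set $D$, whereas $\mu$ may place mass on $\partial D$. To handle this I would relocate the mass inward. Put $D_\eta := \{x \in D : \mathrm{dist}(x, \mathbb{R}^d\setminus D) > \eta\}$, so $\overline{D_\eta}$ is a compact subset of $D$ and $D_\eta \uparrow D$ as $\eta \downarrow 0$; the functions $x \mapsto \mathrm{dist}(x, \overline{D_\eta})$ are continuous on $\bar D$ and decrease pointwise to $0$ (every point of $\bar D$ is a limit of interior points), so Dini's theorem gives $\rho(\eta) := \sup_{x\in\bar D}\mathrm{dist}(x,\overline{D_\eta}) \to 0$. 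Choosing a Borel-measurable nearest-point map $T_\eta : \bar D \to \overline{D_\eta}$ and setting $\nu_\eta := (T_\eta)_{\#}\mu$ produces a signed measure supported in $\overline{D_\eta}$ with $\|\nu_\eta\|_{\mathrm{TV}} \le \|\mu\|_{\mathrm{TV}}$ and, for all $u \in K$, $\bigl|L(u) - \int u\,d\nu_\eta\bigr| = \bigl|\int (u - u\circ T_\eta)\,d\mu\bigr| \le \|\mu\|_{\mathrm{TV}}\,\omega(\rho(\eta))$. Then I would mollify: with a standard mollifier $\rho_\delta$ supported in $B_\delta$ and $\delta < \eta$, the kernel $\kappa := \rho_\delta * \nu_\eta$ is $C^\infty$ with support contained in $\overline{D_\eta} + \overline{B_\delta} \subset D$, hence $\kappa \in C_c^\infty(D)$; Fubini gives $\int_D \kappa\, u\,dx = \int (\rho_\delta * u)\,d\nu_\eta$ (the convolution only probes $u$ at points of $D$, so no extension of $u$ is needed), so $\bigl|\int u\,d\nu_\eta - \int_D \kappa\, u\,dx\bigr| \le \|\mu\|_{\mathrm{TV}}\,\omega(\delta)$. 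Combining the two bounds, $\sup_{u\in K}\bigl|L(u) - \int_D \kappa\, u\,dx\bigr| \le \|L\|\bigl(\omega(\rho(\eta)) + \omega(\delta)\bigr)$, which is $< \epsilon$ once $\eta$, and then $\delta$, are taken small enough.

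The main obstacle is exactly this interior-relocation step: plain mollification only delivers a smooth kernel, and naively multiplying it by a cutoff that vanishes near $\partial D$ can destroy the approximation when $L$ concentrates mass on the boundary; one must move the mass first and then argue --- via Arzelà--Ascoli for the uniform modulus $\omega$ together with the Dini argument for $\rho(\eta)\to 0$ --- that this relocation costs only an error uniformly small over the entire family $K$, not merely for each fixed $u$. Everything else (Riesz--Markov, the elementary properties of mollification, Fubini, and a measurable selection of the nearest point) is routine.
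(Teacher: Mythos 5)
Your proof is correct. Note that the paper itself does not prove this statement --- it is imported verbatim as Lemma 30 of \citet{kovachki_neuraloperator} and used as a black box --- so there is no in-paper argument to compare against; your route (Riesz--Markov representation of $L$ by a signed Radon measure on $\bar D$, inward relocation of boundary mass via a nearest-point pushforward controlled by Dini's theorem, mollification, and Arzel\`a--Ascoli to make the error uniform over $K$) is the standard one and matches the cited source in substance, with the boundary-mass step handled more explicitly than usual. The only points you assert rather than prove --- boundedness of the domain, and the existence of a Borel-measurable nearest-point selection onto $\overline{D_\eta}$ --- are both standard and consistent with the conventions of the cited work.
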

This result establishes that a linear functional acting on functions in \(C(\bar{D})\) can be approximated by a smooth integral kernel with arbitrary error \(\epsilon\). To show functional approximation with a neural network, we slightly modify an existing lemma:

\begin{theorem}[Lemma 32, \citet{kovachki_neuraloperator}]
Let \(D \subset \mathbb{R}^d\) be a domain and \(\mathcal{A} = C(\bar{D})\). Let \(L \in \mathcal{A}^*\) be a linear functional. For any compact set \(K \subset \mathcal{A}\), \(\sigma \in A_0\), and \(\epsilon > 0\), there exists a number \(L \in \mathbb{N}\) and neural network \(\kappa_\theta \in N_L(\sigma; \mathbb{R}^d, \mathbb{R})\) such that:
\begin{equation*}
    \sup_{u\in K} |L(u) - \int_D \kappa_\theta (x) u(x) \text{d}x|_1 < \epsilon
\end{equation*}
\end{theorem}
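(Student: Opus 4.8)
The plan is to \emph{chain} the preceding approximation result (Lemma 30) with the universal approximation property of the activation class \(A_0\), splitting the target error \(\epsilon\) into two halves: one half to replace \(L\) by a smooth integral kernel, the other to replace that kernel by a neural network. Fix \(\epsilon>0\), a compact set \(K\subset\mathcal{A}=C(\bar D)\), and \(\sigma\in A_0\). Since the functional is real-valued, \(|\cdot|_1=|\cdot|\); and I use throughout that \(D\) is bounded, so \(\bar D\) is compact and \(|D|<\infty\) (this is the case in which the integral kernel expression makes sense).

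\textbf{First half.} Apply Lemma 30 with tolerance \(\epsilon/2\): there is a kernel \(\kappa\in C_c^\infty(D)\) — which extends by zero to an element of \(C(\bar D)\) — such that
\[
\sup_{u\in K}\Bigl|\,L(u)-\int_D\kappa(x)\,u(x)\,\mathrm{d}x\,\Bigr|<\frac{\epsilon}{2}.
\]
Compactness of \(K\) in \(C(\bar D)\) also furnishes a finite constant \(M:=\sup_{u\in K}\|u\|_{C(\bar D)}\); if \(M=0\) the statement is trivial, so assume \(M>0\).

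\textbf{Second half.} Since \(\sigma\in A_0\), there is an \(n\in\mathbb{N}\) for which \(N_n(\sigma;\mathbb{R}^d,\mathbb{R})\) is dense in \(C^0(\bar D)\); as \(\kappa|_{\bar D}\in C^0(\bar D)\), choose \(\kappa_\theta\in N_n(\sigma;\mathbb{R}^d,\mathbb{R})\) with \(\sup_{x\in\bar D}|\kappa(x)-\kappa_\theta(x)|<\epsilon/(2M|D|)\). Then for every \(u\in K\),
\[
\Bigl|\int_D(\kappa-\kappa_\theta)\,u\,\mathrm{d}x\Bigr|\le|D|\cdot M\cdot\frac{\epsilon}{2M|D|}=\frac{\epsilon}{2},
\]
and the triangle inequality gives \(\sup_{u\in K}\bigl|L(u)-\int_D\kappa_\theta(x)u(x)\,\mathrm{d}x\bigr|<\epsilon\), with \(\kappa_\theta\in N_n\); the number of layers \(L\) appearing in the statement is this \(n\).

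\textbf{Main obstacle.} The argument is two stacked \(\epsilon/2\)-approximations, so essentially all of the substance is inside Lemma 30, which is given. The point that needs genuine care is that a network \(\kappa_\theta\in N_n\) is \emph{not} compactly supported, so on an unbounded \(D\) the integral \(\int_D\kappa_\theta u\) need not converge and the estimate above breaks; here this is sidestepped by the standing assumption that \(D\) is bounded, but handling unbounded \(D\) would require inserting a smooth cutoff \(\chi\in C_c^\infty(D)\) with \(\chi\equiv 1\) on \(\mathrm{supp}(\kappa)\) and checking that the localized object \(\chi\kappa_\theta\) can still be kept (to within the remaining tolerance) inside the neural-network class. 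The other essential — if routine — ingredient is that the kernel approximation must be made uniform over all \(u\in K\) simultaneously, which is precisely what the uniform bound \(M\), available because \(K\) is compact, delivers.
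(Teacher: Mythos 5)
Your proof is correct and follows essentially the same route as the paper's: apply Lemma 30 to get a smooth kernel within $\epsilon/2$, use the density guaranteed by $\sigma \in A_0$ to approximate that kernel by a network within $\epsilon/(2M|D|)$, and combine via the triangle inequality using the uniform bound $M$ from compactness of $K$. Your additional remarks on the $M=0$ edge case and on unbounded domains go slightly beyond what the paper writes, but the core argument is identical.
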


\begin{proof}
    Since \(K\) is bounded, there exists a number \(M > 0\) such that:
    \begin{equation*}
        \sup_{u\in K} ||u||_{\mathcal{A}} \leq M
    \end{equation*}
    By Lemma 30, there exists a function \(\kappa \in C_c^\infty(D)\) such that: 
    \begin{equation*}
        \sup_{u\in K} |L(u) - \int_D \kappa u \text{d}x | < \frac{\epsilon}{2}
    \end{equation*}
    Since \(\sigma \in A_0\), there exists some \(L \in \mathbb{N}\) and a neural network \(\kappa_\theta \in N_L(\sigma; \mathbb{R}^d, \mathbb{R})\) such that: 
    \begin{equation*}
        ||\kappa_\theta - \kappa||_C \leq \frac{\epsilon}{2M|D|}
    \end{equation*}
    Then for any \(u \in K \):
    \begin{align*}
        |L(u) - \int_D\kappa_\theta (x)u(x) \text{d} x|_1 &\leq |L(u) - \int_D\kappa u \text{d}x|_1 +|\int_D(\kappa - \kappa_\theta)u\text{d}x|_1 \\
        &\leq \frac{\epsilon}{2} + M|D|*||\kappa - \kappa_\theta||_C \\
        &\leq \epsilon
    \end{align*}
\end{proof}

Therefore, linear functionals for continuously differentiable function classes can be approximated to arbitrary accuracy by a neural network using an integral kernel. Extensions to Sobolev and Hilbert spaces can also be proved in a similar manner. 

Interestingly, this overlap between neural functionals and neural operators suggests a deeper connection between the two. The integral kernel functional bears many similarities to the integral kernel operator and, in practice, is implemented as an integral kernel operator evaluated at a single output. From an implementation perspective, the two architectures are equivalent in this manner; examining the theoretical basis also suggests that neural functionals are a subset of neural operators when approximating linear functionals. This can be seen by looking at operator approximation theorems (Lemma 22, \citet{kovachki_neuraloperator}). We refrain from rewriting the theorems here, but focus on the necessary modifications. Specifically, to use a neural operator to approximate a linear functional, the output Banach space \(\mathcal{Y}\) is set to \(\mathbb{R}\) and intermediate dimensions \(J,J'\) are set to 1. The linear map \(F_J:\mathcal{X}\rightarrow\mathbb{R}^J\) from the input field \(x\in\mathcal{X}\) becomes \(F_J = w_1(x)\), where \(w_1(x) \in \mathcal{X}^*\). Lastly, the finite-dimensional map is defined as \(\varphi \in C(\mathbb{R}, \mathbb{R})\) and the output map is defined as \(G_{J'} : \mathbb{R} \rightarrow \mathbb{R}\). 

\subsection{Connection to the Cylindrical Approximation}
\citet{miyagawa2024physicsinformed} propose an alternate mechanism to approximate functionals based on the cylindrical approximation. We explore a possible connection with the current method of using an integral kernel. 

We consider functionals \(F[u]: H \rightarrow \mathbb{R}\), where we restrict functions \(u(x) \in H\) to be in a Hilbert space \(H\) with inner product \(\langle\cdot, \cdot\rangle_H\). Since functions can be represented as a sum of basis functions, \(u(x)\) can be written as \(u(x) = \sum_{k=1}^\infty \langle u,\phi_k\rangle\phi_k(x)\), with an orthonormal basis \(\{\phi_1, \phi_2, \ldots\}\). Truncating the sum to a finite number of terms \(m\) allows the cylindrical approximation of functionals \cite{yokota_functional, Venturi2021_functional, pnas}. This is obtained by substituting the truncated series into the functional:
\begin{equation}
    f(a_1, a_2, \ldots, a_{m}) := F[\sum_{k=1}^{m}a_k\phi_k(x))]
\end{equation}
where \(a_k\) are basis coefficients \(\langle u,\phi_k\rangle\). Using the cylindrical approximation \(f\), functional derivatives can be approximated by: 
\begin{equation}
    \frac{\delta F}{\delta u} \approx \sum_{k=1}^{m} \frac{\partial f}{\partial a_k}\phi_k(x)
\end{equation}
which converges to the true functional derivative as \(m\rightarrow\infty\) \cite{Venturi2021_functional}. When discretizing the functional derivative, we define a set of \(n\) points \(\mathbf{x} = [x_1, x_2, \ldots, x_{n}]\) at which the basis functions are evaluated, which results in the discrete approximation of the functional derivative:
\begin{equation}
    \sum_{k=1}^{m} \frac{\partial f}{\partial a_k}\phi_k(x) \approx \sum_{k=1}^{m} \frac{\partial f}{\partial a_k}\phi_k(\mathbf{x}) = \begin{bmatrix}
            \sum \frac{\partial f}{\partial a_k}\phi_k(x_1) \\
           \sum\frac{\partial f}{\partial a_k}\phi_k(x_2)  \\
           \vdots \\
           \sum\frac{\partial f}{\partial a_k}\phi_k(x_{n}) 
         \end{bmatrix}
\end{equation}

We are interested if the gradient of a neural functional that is parameterized by a kernel integral can represent this discrete object, and what conditions are needed to do so. Furthermore, there are two levels of approximation (a truncated basis and a discretized domain) and we are interested if the gradient of a neural functional can recover a functional derivative in its limit. We will work through three cases of increasing complexity, using a linear kernel, a nonlinear kernel, and a nonlinear and global kernel. 

\paragraph{Linear Kernel Integrals}
A functional parameterized by a linear kernel integral can be written as: 
\begin{equation}
    F_\theta[u] = \int_\Omega \kappa_\theta(x)u(x) \text{d}x \approx \sum_{i=1}^{n} \kappa_\theta(x_i)u_i\mu_i
\end{equation}
using a Riemann sum approximation and quadrature weights \(\mu_i\). Without loss of generality, we can assume some constant quadrature weight \(\mu\) and write the gradient of the Riemann sum as:
\begin{equation}
    \mu \nabla_u(\sum_{i=1}^{n} \kappa_\theta(x_i)u_i) = \mu\begin{bmatrix}
            \frac{\partial}{\partial u_1} (\kappa_\theta(x_1)u_1)\\
           \frac{\partial}{\partial u_2} (\kappa_\theta(x_2)u_2)  \\
           \vdots \\
           \frac{\partial}{\partial u_{n}} (\kappa_\theta(x_{n})u_{n})) 
         \end{bmatrix} = \mu\begin{bmatrix}
            \kappa_\theta(x_1)\\
           \kappa_\theta(x_2) \\
           \vdots \\
           \kappa_\theta(x_{n})
         \end{bmatrix}
\end{equation}
We can interpret this gradient as the discretization of a single basis function \(\kappa_\theta = \phi_1\), and without the necessary coefficients \(\frac{\partial f}{\partial a_k}\). Evidently, this is not enough expressivity to approach a sum of \(m\) coefficients and basis functions, even as \(n\rightarrow\infty\). Intuitively, the output \(F_\theta[u]\) does not depend on \(u\) enough, causing its gradient \(\nabla_u\) to be too simple. A potential remedy is to modify the kernel to be nonlinear by using \(\kappa_\theta(x, u(x))\).

\paragraph{Nonlinear Kernel Integrals} Following the same approximation, the gradient of a nonlinear kernel integral can be written as:
\begin{equation}
    \mu \nabla_u(\sum_{i=1}^{n} \kappa_\theta(x_i, u_i)u_i) = \mu\begin{bmatrix}
        \frac{\partial}{\partial u_1} (\kappa_\theta(x_1, u_1)u_1)\\
       \frac{\partial}{\partial u_2} (\kappa_\theta(x_2, u_2)u_2)  \\
       \vdots \\
       \frac{\partial}{\partial u_{n}} (\kappa_\theta(x_{n}, u_{n})u_{n})) 
     \end{bmatrix} = \mu\begin{bmatrix}
        u_1\frac{\partial}{\partial u_1}\kappa_\theta(x_1, u_1) \: + \: \kappa_\theta(x_1, u_1)\\
       u_2\frac{\partial}{\partial u_2}\kappa_\theta(x_2, u_2) \: + \: \kappa_\theta(x_2, u_2) \\
       \vdots \\
       u_{n}\frac{\partial}{\partial u_{n}}\kappa_\theta(x_{n}, u_{n}) \: + \: \kappa_\theta(x_{n}, u_{n})
     \end{bmatrix}
\end{equation}
using the product rule. In general, \(\kappa_\theta(x_i, u_i)\) can be quite complex and nonlinear, however, by construction it only depends on \((x_i, u_i)\). We define a function \(d_i(x_i) =  \frac{\partial}{\partial u_i}\kappa_\theta(x_i, u_i)|_{u_i=u_i}\), which evaluates the partial derivative at \(x_i\). This lets us construct at most \(n\) basis functions \(\{d_1, d_2, \ldots, d_n\}\), however, this is still not enough expressivity, as each entry of the discretized functional derivative is still restricted to one basis function, albeit a different one at each row. Despite this shortcoming, writing this expansion allows us to see that producing a sum over multiple basis functions requires the kernel to depend globally on \(\mathbf{u} = [u_1, u_2, \ldots , u_n]\), since the gradient of the Riemann sum no longer degenerates to a single term at each row. 

\paragraph{Nonlinear Global Kernel Integrals} Using the same approximation, the gradient of a nonlinear, global kernel integral can be written as:
\begin{equation}
    \mu \nabla_u(\sum_{i=1}^{n} \kappa_\theta(x_i, \mathbf{u})u_i) = \mu\begin{bmatrix}
        \frac{\partial}{\partial u_1} (\sum\kappa_\theta(x_i, \mathbf{u})u_i)\\
       \frac{\partial}{\partial u_2} (\sum\kappa_\theta(x_i, \mathbf{u})u_i)\\
       \vdots \\
       \frac{\partial}{\partial u_{n}} (\sum\kappa_\theta(x_i, \mathbf{u})u_i)
     \end{bmatrix}
\end{equation}
The expression \(\frac{\partial}{\partial u_{j}}(\sum\kappa_\theta(x_i, \mathbf{u})u_i)\) can be expanded into: \begin{align}
    \frac{\partial}{\partial u_{j}}(\sum_{i=1}^n\kappa_\theta(x_i, \mathbf{u})u_i) &=  u_1\frac{\partial}{\partial u_{j}}\kappa_\theta(x_1, \mathbf{u}) \:+ \ldots \\&+ \: u_j\frac{\partial}{\partial u_{j}}\kappa_\theta(x_j, \mathbf{u}) \: + \kappa_\theta(x_j, \mathbf{u}) \: + \ldots \nonumber \\ &+ \: u_n\frac{\partial}{\partial u_{j}}\kappa_\theta(x_n, \mathbf{u}) \nonumber
\end{align} 
In general, we can construct a kernel \(\kappa_\theta\) such that \(\frac{\partial}{\partial u_{j}}\kappa_\theta(x_i, \mathbf{u})\) is different for \(i=\{1, \ldots, n\}\). A simple example is provided based on FiLM modulation:
\begin{equation}
    \kappa_\theta(x_i, \mathbf{u}) = [W^u\mathbf{u}]_i(W^xx_i +b^x) + [b^u]_i
\end{equation}
where \(W^u \in \mathbb{R}^{n\times n}, W^x \in \mathbb{R}^{1x1}, b^u \in \mathbb{R}^{n},b^x \in \mathbb{R}^1\) and \([\cdot]_i\) is an operation that takes the \(i\)-th row of a column vector. In this case, \(\frac{\partial}{\partial u_{j}}\kappa_\theta(x_i, \mathbf{u})\) is equivalent to a function \(d_{ij}(x_i) = W^q_{ij}W^xx_i\). With even deeper and more complex kernels, functions \(d_{ij}\) can become more expressive. Without loss of generality, we can also absorb the basis coefficients into \(d_{ij}\). 

With this formulation, we can see that the \(j\)-th row of the gradient vector can be written as a sum of \(n\) functions \(\sum_{i=1}^{n} d_{ij}(x_i)\), which reduces to the cylindrical approximation when letting \(d_{ij}\) be arbitrary. Furthermore, letting \(n \rightarrow \infty\) increases the number of functions \(d_{ij}\) and coordinates \(x_i\), and is consistent with the interpretation of simultaneously increasing the number of bases to approach the true functional derivative as well as converting from a discrete to continuous representation. 

The cylindrical approximation makes no assumption on the linearity of \(F\), therefore, when using a global, nonlinear kernel, a kernel integral can represent a functional derivative. This theoretical insight is also supported by empirical observations. Through ablation studies in Section \ref{app:ablations}, we find that modeling complex functional derivatives (such as in the KdV equation) is only possible with a global, nonlinear kernel, and simplifications of the kernel degrade the performance. 

\subsection{Derivations with Varying Coefficients and Source Terms}
Using HNS with PDEs with varying coefficients and source terms is possible on a case-by-case basis, requiring additional effort to derive the Hamiltonian structure. We provide examples below by introducing coefficients to 1D Advection and 1D KdV as well as varying bathymetry (source terms) to 2D SWE. 

\paragraph{Varying Coefficients} In the presence of coefficients, the Hamiltonians for 1D Advection and 1D KdV become:
\begin{equation}
    \mathcal{H}_{adv}[u] = \int_\Omega -\frac{c}{2}(u(x))^2 \text{d}x, \qquad \mathcal{H}_{kdv}[u] = \int_\Omega -\frac{\alpha}{6}(u(x))^3  - \beta u(x)\frac{\partial^2 u}{\partial x^2}(x) \text{d}x,
\end{equation}
For 1D Advection, one can verify the Hamiltonian structure:
\begin{equation}
    \frac{\delta \mathcal{H}_{adv}}{\delta u} = -cu(x), \qquad \mathcal{J}(\frac{\delta \mathcal{H}_{adv}}{\delta u}) = -c\frac{\partial u}{\partial x} = \frac{\text{d}u}{\text{d}t}
\end{equation}
A similar calculation recovers the PDE for the 1D KdV equation  (\(u_t + \alpha uu_x + \beta u_{xxx} = 0\)) from Hamilton's equations by using \(\frac{\delta \mathcal{H}_{kdv}}{\delta u} = -\frac{\alpha}{2}u^2 - \beta u_{xx}\). To accommodate varying coefficients, this suggests training HNS with coefficient information using the loss: \(\mathcal{L}(\nabla_u\mathcal{H}_\theta(\mathbf{u}, \mathbf{x}, \mathbf{c}), \frac{\delta \mathcal{H}}{\delta u})\). In special cases where the coefficients can be factored out (i.e., \(\frac{\delta \mathcal{H}[u, c]}{\delta u} = c\frac{\delta \mathcal{H}[u]}{\delta u}\)), once can scale the base model output \(\nabla_u\mathcal{H}_\theta(\mathbf{u}, \mathbf{x})\) by \(c\) to achieve the same effect. 

\paragraph{Varying Source Terms} The 2D shallow-water equations describe free surfaces where the horizontal length scale is significantly larger than the vertical length scale. Within this setup, the free surface can be above a spatially-varying bathymetry \(b(x, y)\), or the topography of the bottom surface. Changes in the elevation of the bathymetry can accelerate or decelerate flows, which acts as a source term on the momentum balance \cite{randall_bathymetry, HOAILIN_bathymetry}. In particular, the shallow-water equations are modified to become:
\begin{equation}
    \partial_th + \nabla\cdot (\mathbf{v}h) = 0, \qquad \partial_t\mathbf{v} + \mathbf{v}\cdot\nabla\mathbf{v}=-g\nabla h -gh \nabla b
\end{equation}
Furthermore, \(h(x,y,t)\) now represents the height above the bathymetry \(b(x,y,t)\). This source term is represented in the Hamiltonian as:
\begin{equation}
    \mathcal{H}^b[\mathbf{u}] = \mathcal{H}^b[v_x, v_y,h] = \int_\Omega \frac{1}{2}h(v_x^2 + v_y^2 ) + \frac{1}{2}gh^2(1+b)\: \text{d}A
\end{equation}
and the operator matrix \(\mathcal{J}\) is left unchanged (Equation \ref{eqn:J_swe}). One can check that applying Hamilton's equations recovers the modified shallow-water equations. When comparing this derivation to one with constant bathymetry, one can show that:
\begin{equation}
    \frac{\delta \mathcal{H}^b[u]}{\delta u} = \frac{\delta\mathcal{H}[u]}{\delta u} + \begin{bmatrix}
       0\\
       0\\
       ghb
     \end{bmatrix}
\end{equation}
This suggests that instead of scaling model outputs \(\nabla_u\mathcal{H}_\theta(\mathbf{u}, \mathbf{x})\) (as in the case of coefficients), source terms can be potentially captured by adding terms to model outputs. In PDEs where a decomposition like this is not possible, the source term will need to be added to the model input to produce the output \(\nabla_u\mathcal{H}_\theta(\mathbf{u}, \mathbf{x}, \mathbf{b})\), where bold terms indicate discretized fields/coordinates.

\section{Implementation Details}
\label{app:implementation}
\subsection{Numerical Methods}
\label{app:numerical}
\paragraph{Numerical Integration}
ODE integration is performed using a 2nd-order Adams-Bashforth scheme. The solution at the next timestep \(\mathbf{u}^{t+1}\) is calculated using the current timestep \(\mathbf{u}^t\) and an estimate of the current derivative \(\frac{d\mathbf{u}}{dt}|_{t=t} =f(\mathbf{\mathbf{u}}^t)\):
\begin{equation}
    \mathbf{u}^{t+1} = \mathbf{u}^t + \Delta t\left(\frac{3}{2}f(\mathbf{\mathbf{u}}^t) -  \frac{1}{2}f(\mathbf{\mathbf{u}}^{t-1})\right)
\end{equation}
This can be implemented in constant time by caching prior derivative estimates \(\{f(\mathbf{u}^i): i<t\}\), allowing the ODE integrator to have a 2nd-order truncation error (\(\mathcal{O}(\Delta t^2)\)) while remaining fast. At the first timestep, there are no cached derivatives, therefore a first-order Euler integrator is used: \(\mathbf{u}^{t+1} = \mathbf{u}^t + \Delta tf(\mathbf{\mathbf{u}}^t)\). For the given experiments, these integrators are accurate since \(\Delta t\) is small enough; to be used with larger timesteps \(\Delta t\), higher-order integrators may be needed or smaller, intermediate steps will need to be taken \cite{Zhou_2025}. This adds additional consideration to the training and deployment of derivative-based neural PDE solvers. 

\paragraph{Finite Difference Schemes} Implementing infinite-dimensional Hamiltonian structure involves approximating the operator \(\mathcal{J}\), either through numerical methods or neural operators. We opt for numerical methods, as the operators are simple and in all our tests only involve first-order spatial derivatives \(\partial_x\). For the 1D Advection equation, we implement a 2nd-order central difference scheme. For the 1D KdV equation, the spatial derivatives exhibit larger gradients, therefore we use an 8th-order central difference scheme with a smoothing stencil to approximate \(\partial_x\) \cite{snrd}. A similar scheme is implemented in 2D to approximate \(\partial_x\) and \(\partial_y\) for the 2D SWE experiments, and additionally, a Savitzky-Golay filter is used to further damp numerical oscillations. These arise since the lack of viscosity creates discontinuities in the shallow-water equations, alternative methods may include flux-limiting or non-oscillatory schemes \cite{HARTEN19973, LIU1994200}. While these implementations work, they still required tuning, as such, the use of a neural operator to approximate \(\mathcal{J}\) to make HNS fully learnable would be an interesting future direction. 

Additionally, to calculate temporal derivatives \(d\mathbf{u}/dt\) during training, a 4-th order Richardson extrapolation is used to provide accurate derivatives. At the boundaries \(t=0, \:t=T\) a one-sided Richardson extrapolation is used \cite{RAHUL200613}.  

\paragraph{Numerical Solvers} The analytical solution to the Advection equation is used to generate its datasets. The numerical solver for the KdV equation is from \citet{brandstetter2022liepointsymmetrydata}; due to the stiffness of the PDE, the solutions do not always conserve the Hamiltonian. The WENO schemes used allow for more stable derivatives, but at the cost of numerical viscosity, which can affect the Hamiltonian since it is highly nonlinear. Therefore, we filter the generated KdV solutions and extract 50\% of the trajectories with the smallest variation in the Hamiltonian. Similar fluctuations in the Hamiltonian are seen in SWE trajectories generated from PyClaw \cite{pyclaw-sisc}, but the variations are small enough to be ignored.

\subsection{Experimental Details} 
\paragraph{Hyperparameters}
Each model has different hyperparameters for different PDEs (Adv/KdV/SWE). We provide key hyperparameters for these experiments in Tables \ref{tab:fno_h}, \ref{tab:unet_h}, \ref{tab:hnf_h}. 

\begin{table}[!htb]
    \begin{minipage}{.3\linewidth}
      \centering
      \addtolength{\tabcolsep}{-0.4em}
         \begin{tabular}{l c c c}
            \toprule
                 & Adv & KdV & SWE \\
                 \midrule
                Modes & 10 & 12 & 12\\ 
                Width & 24 & 32 & 48\\ 
                Layers & 5 & 5 & 5 \\ 
                \bottomrule
        \end{tabular}
        \vspace{5pt}
          \caption{FNO Parameters}
          \label{tab:fno_h}
    \end{minipage}%
    \begin{minipage}{.35\linewidth}
      \centering
      \addtolength{\tabcolsep}{-0.4em}
         \begin{tabular}{l c c c}
            \toprule
                 & Adv & KdV & SWE \\
                 \midrule
                Width & 8 & 12 & 32\\ 
                Bottleneck Dim & 32 & 48 & 128\\ 
                Layers & 6 & 6 & 6 \\ 
                \bottomrule
        \end{tabular}
        \vspace{5pt}
          \caption{Unet Parameters}
          \label{tab:unet_h}
    \end{minipage} 
        \begin{minipage}{.35\linewidth}
      \centering
      \addtolength{\tabcolsep}{-0.4em}
         \begin{tabular}{l c c c}
            \toprule
                 & Adv & KdV & SWE \\
                 \midrule
                Width & 32 & 32 & 64 \\
                Kernel & local & global & global\\ 
                Cond.  & FiLM & FiLM & FiLM\\
                \bottomrule
        \end{tabular}
        \vspace{5pt}
          \caption{HNS Parameters}
          \label{tab:hnf_h}
    \end{minipage} 
\end{table}
\paragraph{Computational Resources} All experiments were run on a single NVIDIA RTX 6000 Ada GPU; 1D experiments were usually run within 10 minutes, while 2D experiments were usually run within an hour. The largest dataset is around 20 GB, for the 2D Shallow-Water equations.

\newpage
\section*{NeurIPS Paper Checklist}

\begin{enumerate}

\item {\bf Claims}
    \item[] Question: Do the main claims made in the abstract and introduction accurately reflect the paper's contributions and scope?
    \item[] Answer: \answerYes{} 
    \item[] Justification: The authors were careful to not claim too much in the abstract and introduction, focusing on summarizing the paper and providing an introduction to the field and how the paper fits in.
    \item[] Guidelines:
    \begin{itemize}
        \item The answer NA means that the abstract and introduction do not include the claims made in the paper.
        \item The abstract and/or introduction should clearly state the claims made, including the contributions made in the paper and important assumptions and limitations. A No or NA answer to this question will not be perceived well by the reviewers. 
        \item The claims made should match theoretical and experimental results, and reflect how much the results can be expected to generalize to other settings. 
        \item It is fine to include aspirational goals as motivation as long as it is clear that these goals are not attained by the paper. 
    \end{itemize}

\item {\bf Limitations}
    \item[] Question: Does the paper discuss the limitations of the work performed by the authors?
    \item[] Answer: \answerYes{} 
    \item[] Justification: Limitations are discussed in Section \ref{sec:discussion} (Discussion). 
    \item[] Guidelines:
    \begin{itemize}
        \item The answer NA means that the paper has no limitation while the answer No means that the paper has limitations, but those are not discussed in the paper. 
        \item The authors are encouraged to create a separate "Limitations" section in their paper.
        \item The paper should point out any strong assumptions and how robust the results are to violations of these assumptions (e.g., independence assumptions, noiseless settings, model well-specification, asymptotic approximations only holding locally). The authors should reflect on how these assumptions might be violated in practice and what the implications would be.
        \item The authors should reflect on the scope of the claims made, e.g., if the approach was only tested on a few datasets or with a few runs. In general, empirical results often depend on implicit assumptions, which should be articulated.
        \item The authors should reflect on the factors that influence the performance of the approach. For example, a facial recognition algorithm may perform poorly when image resolution is low or images are taken in low lighting. Or a speech-to-text system might not be used reliably to provide closed captions for online lectures because it fails to handle technical jargon.
        \item The authors should discuss the computational efficiency of the proposed algorithms and how they scale with dataset size.
        \item If applicable, the authors should discuss possible limitations of their approach to address problems of privacy and fairness.
        \item While the authors might fear that complete honesty about limitations might be used by reviewers as grounds for rejection, a worse outcome might be that reviewers discover limitations that aren't acknowledged in the paper. The authors should use their best judgment and recognize that individual actions in favor of transparency play an important role in developing norms that preserve the integrity of the community. Reviewers will be specifically instructed to not penalize honesty concerning limitations.
    \end{itemize}

\item {\bf Theory assumptions and proofs}
    \item[] Question: For each theoretical result, does the paper provide the full set of assumptions and a complete (and correct) proof?
    \item[] Answer: \answerYes{} 
    \item[] Justification: An informal proof is provided in the main body in Section 3, but a formal proof is given in Appendix \ref{app:proof}.  
    \item[] Guidelines:
    \begin{itemize}
        \item The answer NA means that the paper does not include theoretical results. 
        \item All the theorems, formulas, and proofs in the paper should be numbered and cross-referenced.
        \item All assumptions should be clearly stated or referenced in the statement of any theorems.
        \item The proofs can either appear in the main paper or the supplemental material, but if they appear in the supplemental material, the authors are encouraged to provide a short proof sketch to provide intuition. 
        \item Inversely, any informal proof provided in the core of the paper should be complemented by formal proofs provided in appendix or supplemental material.
        \item Theorems and Lemmas that the proof relies upon should be properly referenced. 
    \end{itemize}

    \item {\bf Experimental result reproducibility}
    \item[] Question: Does the paper fully disclose all the information needed to reproduce the main experimental results of the paper to the extent that it affects the main claims and/or conclusions of the paper (regardless of whether the code and data are provided or not)?
    \item[] Answer: \answerYes{} 
    \item[] Justification: We have tried to provide the necessary detail for reproducing results, and all code and datasets will be also be released.
    \item[] Guidelines:
    \begin{itemize}
        \item The answer NA means that the paper does not include experiments.
        \item If the paper includes experiments, a No answer to this question will not be perceived well by the reviewers: Making the paper reproducible is important, regardless of whether the code and data are provided or not.
        \item If the contribution is a dataset and/or model, the authors should describe the steps taken to make their results reproducible or verifiable. 
        \item Depending on the contribution, reproducibility can be accomplished in various ways. For example, if the contribution is a novel architecture, describing the architecture fully might suffice, or if the contribution is a specific model and empirical evaluation, it may be necessary to either make it possible for others to replicate the model with the same dataset, or provide access to the model. In general. releasing code and data is often one good way to accomplish this, but reproducibility can also be provided via detailed instructions for how to replicate the results, access to a hosted model (e.g., in the case of a large language model), releasing of a model checkpoint, or other means that are appropriate to the research performed.
        \item While NeurIPS does not require releasing code, the conference does require all submissions to provide some reasonable avenue for reproducibility, which may depend on the nature of the contribution. For example
        \begin{enumerate}
            \item If the contribution is primarily a new algorithm, the paper should make it clear how to reproduce that algorithm.
            \item If the contribution is primarily a new model architecture, the paper should describe the architecture clearly and fully.
            \item If the contribution is a new model (e.g., a large language model), then there should either be a way to access this model for reproducing the results or a way to reproduce the model (e.g., with an open-source dataset or instructions for how to construct the dataset).
            \item We recognize that reproducibility may be tricky in some cases, in which case authors are welcome to describe the particular way they provide for reproducibility. In the case of closed-source models, it may be that access to the model is limited in some way (e.g., to registered users), but it should be possible for other researchers to have some path to reproducing or verifying the results.
        \end{enumerate}
    \end{itemize}

\item {\bf Open access to data and code}
    \item[] Question: Does the paper provide open access to the data and code, with sufficient instructions to faithfully reproduce the main experimental results, as described in supplemental material?
    \item[] Answer: \answerYes{} 
    \item[] Justification: All code and datasets will be made publicly available. 
    \item[] Guidelines:
    \begin{itemize}
        \item The answer NA means that paper does not include experiments requiring code.
        \item Please see the NeurIPS code and data submission guidelines (\url{https://nips.cc/public/guides/CodeSubmissionPolicy}) for more details.
        \item While we encourage the release of code and data, we understand that this might not be possible, so “No” is an acceptable answer. Papers cannot be rejected simply for not including code, unless this is central to the contribution (e.g., for a new open-source benchmark).
        \item The instructions should contain the exact command and environment needed to run to reproduce the results. See the NeurIPS code and data submission guidelines (\url{https://nips.cc/public/guides/CodeSubmissionPolicy}) for more details.
        \item The authors should provide instructions on data access and preparation, including how to access the raw data, preprocessed data, intermediate data, and generated data, etc.
        \item The authors should provide scripts to reproduce all experimental results for the new proposed method and baselines. If only a subset of experiments are reproducible, they should state which ones are omitted from the script and why.
        \item At submission time, to preserve anonymity, the authors should release anonymized versions (if applicable).
        \item Providing as much information as possible in supplemental material (appended to the paper) is recommended, but including URLs to data and code is permitted.
    \end{itemize}

\item {\bf Experimental setting/details}
    \item[] Question: Does the paper specify all the training and test details (e.g., data splits, hyperparameters, how they were chosen, type of optimizer, etc.) necessary to understand the results?
    \item[] Answer: \answerYes{} 
    \item[] Justification: We have provided information on the experimental setup and specific details can be found in the config files of the provided code. 
    \item[] Guidelines:
    \begin{itemize}
        \item The answer NA means that the paper does not include experiments.
        \item The experimental setting should be presented in the core of the paper to a level of detail that is necessary to appreciate the results and make sense of them.
        \item The full details can be provided either with the code, in appendix, or as supplemental material.
    \end{itemize}

\item {\bf Experiment statistical significance}
    \item[] Question: Does the paper report error bars suitably and correctly defined or other appropriate information about the statistical significance of the experiments?
    \item[] Answer: \answerYes{} 
    \item[] Justification: Error bars are given for the main quantitative, PDE results in Tables \ref{tab:1d} and \ref{tab:2d}. 
    \item[] Guidelines:
    \begin{itemize}
        \item The answer NA means that the paper does not include experiments.
        \item The authors should answer "Yes" if the results are accompanied by error bars, confidence intervals, or statistical significance tests, at least for the experiments that support the main claims of the paper.
        \item The factors of variability that the error bars are capturing should be clearly stated (for example, train/test split, initialization, random drawing of some parameter, or overall run with given experimental conditions).
        \item The method for calculating the error bars should be explained (closed form formula, call to a library function, bootstrap, etc.)
        \item The assumptions made should be given (e.g., Normally distributed errors).
        \item It should be clear whether the error bar is the standard deviation or the standard error of the mean.
        \item It is OK to report 1-sigma error bars, but one should state it. The authors should preferably report a 2-sigma error bar than state that they have a 96\% CI, if the hypothesis of Normality of errors is not verified.
        \item For asymmetric distributions, the authors should be careful not to show in tables or figures symmetric error bars that would yield results that are out of range (e.g. negative error rates).
        \item If error bars are reported in tables or plots, The authors should explain in the text how they were calculated and reference the corresponding figures or tables in the text.
    \end{itemize}

\item {\bf Experiments compute resources}
    \item[] Question: For each experiment, does the paper provide sufficient information on the computer resources (type of compute workers, memory, time of execution) needed to reproduce the experiments?
    \item[] Answer: \answerYes{} 
    \item[] Justification: Details on the compute resources are provided in the Appendix \ref{app:implementation}.
    \item[] Guidelines:
    \begin{itemize}
        \item The answer NA means that the paper does not include experiments.
        \item The paper should indicate the type of compute workers CPU or GPU, internal cluster, or cloud provider, including relevant memory and storage.
        \item The paper should provide the amount of compute required for each of the individual experimental runs as well as estimate the total compute. 
        \item The paper should disclose whether the full research project required more compute than the experiments reported in the paper (e.g., preliminary or failed experiments that didn't make it into the paper). 
    \end{itemize}
    
\item {\bf Code of ethics}
    \item[] Question: Does the research conducted in the paper conform, in every respect, with the NeurIPS Code of Ethics \url{https://neurips.cc/public/EthicsGuidelines}?
    \item[] Answer: \answerYes{} 
    \item[] Justification: The paper does not violate the code of ethics.
    \item[] Guidelines:
    \begin{itemize}
        \item The answer NA means that the authors have not reviewed the NeurIPS Code of Ethics.
        \item If the authors answer No, they should explain the special circumstances that require a deviation from the Code of Ethics.
        \item The authors should make sure to preserve anonymity (e.g., if there is a special consideration due to laws or regulations in their jurisdiction).
    \end{itemize}

\item {\bf Broader impacts}
    \item[] Question: Does the paper discuss both potential positive societal impacts and negative societal impacts of the work performed?
    \item[] Answer: \answerNA{} 
    \item[] Justification: The work is largely foundational and in the field of physics; at this moment there are no likely negative impacts of the work.
    \item[] Guidelines:
    \begin{itemize}
        \item The answer NA means that there is no societal impact of the work performed.
        \item If the authors answer NA or No, they should explain why their work has no societal impact or why the paper does not address societal impact.
        \item Examples of negative societal impacts include potential malicious or unintended uses (e.g., disinformation, generating fake profiles, surveillance), fairness considerations (e.g., deployment of technologies that could make decisions that unfairly impact specific groups), privacy considerations, and security considerations.
        \item The conference expects that many papers will be foundational research and not tied to particular applications, let alone deployments. However, if there is a direct path to any negative applications, the authors should point it out. For example, it is legitimate to point out that an improvement in the quality of generative models could be used to generate deepfakes for disinformation. On the other hand, it is not needed to point out that a generic algorithm for optimizing neural networks could enable people to train models that generate Deepfakes faster.
        \item The authors should consider possible harms that could arise when the technology is being used as intended and functioning correctly, harms that could arise when the technology is being used as intended but gives incorrect results, and harms following from (intentional or unintentional) misuse of the technology.
        \item If there are negative societal impacts, the authors could also discuss possible mitigation strategies (e.g., gated release of models, providing defenses in addition to attacks, mechanisms for monitoring misuse, mechanisms to monitor how a system learns from feedback over time, improving the efficiency and accessibility of ML).
    \end{itemize}
    
\item {\bf Safeguards}
    \item[] Question: Does the paper describe safeguards that have been put in place for responsible release of data or models that have a high risk for misuse (e.g., pretrained language models, image generators, or scraped datasets)?
    \item[] Answer: \answerNA{} 
    \item[] Justification: There are no risks for misuse.
    \item[] Guidelines:
    \begin{itemize}
        \item The answer NA means that the paper poses no such risks.
        \item Released models that have a high risk for misuse or dual-use should be released with necessary safeguards to allow for controlled use of the model, for example by requiring that users adhere to usage guidelines or restrictions to access the model or implementing safety filters. 
        \item Datasets that have been scraped from the Internet could pose safety risks. The authors should describe how they avoided releasing unsafe images.
        \item We recognize that providing effective safeguards is challenging, and many papers do not require this, but we encourage authors to take this into account and make a best faith effort.
    \end{itemize}

\item {\bf Licenses for existing assets}
    \item[] Question: Are the creators or original owners of assets (e.g., code, data, models), used in the paper, properly credited and are the license and terms of use explicitly mentioned and properly respected?
    \item[] Answer: \answerYes{} 
    \item[] Justification: The authors have created all assets used in the paper.
    \item[] Guidelines:
    \begin{itemize}
        \item The answer NA means that the paper does not use existing assets.
        \item The authors should cite the original paper that produced the code package or dataset.
        \item The authors should state which version of the asset is used and, if possible, include a URL.
        \item The name of the license (e.g., CC-BY 4.0) should be included for each asset.
        \item For scraped data from a particular source (e.g., website), the copyright and terms of service of that source should be provided.
        \item If assets are released, the license, copyright information, and terms of use in the package should be provided. For popular datasets, \url{paperswithcode.com/datasets} has curated licenses for some datasets. Their licensing guide can help determine the license of a dataset.
        \item For existing datasets that are re-packaged, both the original license and the license of the derived asset (if it has changed) should be provided.
        \item If this information is not available online, the authors are encouraged to reach out to the asset's creators.
    \end{itemize}

\item {\bf New assets}
    \item[] Question: Are new assets introduced in the paper well documented and is the documentation provided alongside the assets?
    \item[] Answer: \answerYes{} 
    \item[] Justification: New code and datasets will have accompanying documentation.
    \item[] Guidelines:
    \begin{itemize}
        \item The answer NA means that the paper does not release new assets.
        \item Researchers should communicate the details of the dataset/code/model as part of their submissions via structured templates. This includes details about training, license, limitations, etc. 
        \item The paper should discuss whether and how consent was obtained from people whose asset is used.
        \item At submission time, remember to anonymize your assets (if applicable). You can either create an anonymized URL or include an anonymized zip file.
    \end{itemize}

\item {\bf Crowdsourcing and research with human subjects}
    \item[] Question: For crowdsourcing experiments and research with human subjects, does the paper include the full text of instructions given to participants and screenshots, if applicable, as well as details about compensation (if any)? 
    \item[] Answer: \answerNA{} 
    \item[] Justification: The research does not involve crowdsourcing or human subjects.
    \item[] Guidelines:
    \begin{itemize}
        \item The answer NA means that the paper does not involve crowdsourcing nor research with human subjects.
        \item Including this information in the supplemental material is fine, but if the main contribution of the paper involves human subjects, then as much detail as possible should be included in the main paper. 
        \item According to the NeurIPS Code of Ethics, workers involved in data collection, curation, or other labor should be paid at least the minimum wage in the country of the data collector. 
    \end{itemize}

\item {\bf Institutional review board (IRB) approvals or equivalent for research with human subjects}
    \item[] Question: Does the paper describe potential risks incurred by study participants, whether such risks were disclosed to the subjects, and whether Institutional Review Board (IRB) approvals (or an equivalent approval/review based on the requirements of your country or institution) were obtained?
    \item[] Answer: \answerNA{} 
    \item[] Justification: The paper does not involve crowdsourcing or human subjects. 
    \item[] Guidelines:
    \begin{itemize}
        \item The answer NA means that the paper does not involve crowdsourcing nor research with human subjects.
        \item Depending on the country in which research is conducted, IRB approval (or equivalent) may be required for any human subjects research. If you obtained IRB approval, you should clearly state this in the paper. 
        \item We recognize that the procedures for this may vary significantly between institutions and locations, and we expect authors to adhere to the NeurIPS Code of Ethics and the guidelines for their institution. 
        \item For initial submissions, do not include any information that would break anonymity (if applicable), such as the institution conducting the review.
    \end{itemize}

\item {\bf Declaration of LLM usage}
    \item[] Question: Does the paper describe the usage of LLMs if it is an important, original, or non-standard component of the core methods in this research? Note that if the LLM is used only for writing, editing, or formatting purposes and does not impact the core methodology, scientific rigorousness, or originality of the research, declaration is not required.
    \item[] Answer: \answerNA{} 
    \item[] Justification: LLMs were not used in the core method development of the research. 
    \item[] Guidelines:
    \begin{itemize}
        \item The answer NA means that the core method development in this research does not involve LLMs as any important, original, or non-standard components.
        \item Please refer to our LLM policy (\url{https://neurips.cc/Conferences/2025/LLM}) for what should or should not be described.
    \end{itemize}

\end{enumerate}

\end{document}